\documentclass{article} % For LaTeX2e
\usepackage{iclr2025_conference,times}

% Optional math commands from https://github.com/goodfeli/dlbook_notation.
%%%%% NEW MATH DEFINITIONS %%%%%

\usepackage{amsmath,amsfonts,bm}

% Mark sections of captions for referring to divisions of figures

% Highlight a newly defined term

% Figure reference, lower-case.

% Figure reference, capital. For start of sentence

% Section reference, lower-case.

% Section reference, capital.

% Reference to two sections.

% Reference to three sections.

% Reference to an equation, lower-case.
\def\eqref#1{equation~\ref{#1}}
% Reference to an equation, upper case

% A raw reference to an equation---avoid using if possible

% Reference to a chapter, lower-case.

% Reference to an equation, upper case.

% Reference to a range of chapters

% Reference to an algorithm, lower-case.

% Reference to an algorithm, upper case.

% Reference to a part, lower case

% Reference to a part, upper case

\def\1{\bm{1}}

% Random variables

% rm is already a command, just don't name any random variables m

% Random vectors

% Elements of random vectors

% Random matrices

% Elements of random matrices

% Vectors

\def\vq{{\bm{q}}}

\def\vx{{\bm{x}}}

% Elements of vectors

% Matrix

\def\mK{{\bm{K}}}

\def\mQ{{\bm{Q}}}

\def\mV{{\bm{V}}}

% Tensor
\DeclareMathAlphabet{\mathsfit}{\encodingdefault}{\sfdefault}{m}{sl}
\SetMathAlphabet{\mathsfit}{bold}{\encodingdefault}{\sfdefault}{bx}{n}

% Graph

% Sets

% Don't use a set called E, because this would be the same as our symbol
% for expectation.

% Entries of a matrix

% entries of a tensor
% Same font as tensor, without \bm wrapper

% The true underlying data generating distribution

% The empirical distribution defined by the training set

% The model distribution

% Stochastic autoencoder distributions

 % Laplace distribution

% Wolfram Mathworld says $L^2$ is for function spaces and $\ell^2$ is for vectors
% But then they seem to use $L^2$ for vectors throughout the site, and so does
% wikipedia.

 % See usage in notation.tex. Chosen to match Daphne's book.

\usepackage{hyperref}
\usepackage{url}
\usepackage{amsmath} % For mathematical symbols and equations
\usepackage{amsfonts} % For math fonts
\usepackage{amssymb} % For additional math symbols
\usepackage{graphicx} % For including images
\usepackage{wrapfig}
\usepackage[ruled,vlined]{algorithm2e}
\usepackage{hyperref} % For hyperlinks
\usepackage{cite} % For citations
\usepackage{booktabs} % For better table formatting
\usepackage{fancyhdr} % For custom headers and footers
\usepackage{color}
\usepackage{amsthm}
\usepackage{comment}
\usepackage{inconsolata}
\usepackage{subfig}
\usepackage{multirow}

\theoremstyle{plain}
\newtheorem*{theorem*}{Theorem}
\newtheorem{theorem}{Theorem}[section]

\newtheorem{lemma}[theorem]{Lemma}

\theoremstyle{definition}

\newtheorem{assumption}[theorem]{Assumption}

\newcommand{\methodname}{\textsc{ZETA}}

\setlength{\abovecaptionskip}{3pt}
\setlength{\belowcaptionskip}{3pt}
\setlength{\intextsep}{0pt}

\title{\methodname: Leveraging $Z$-order Curves for \\ Efficient Top-$k$ Attention}
% Enhancing Sparse Transformer Efficiency with Parallel Top-$k$ Attention in One-Dimension}
% 1DFormer: Efficient Parallel Top-$k$ Attention Search in one-dimensional Space}

% Authors must not appear in the submitted version. They should be hidden
% as long as the \iclrfinalcopy macro remains commented out below.
% Non-anonymous submissions will be rejected without review.

\author{Qiuhao Zeng\textsuperscript{$\spadesuit$}\quad 
    Jerry Huang\textsuperscript{$\heartsuit\diamondsuit$}\quad 
    Peng Lu\textsuperscript{$\heartsuit$}\quad 
    Gezheng Xu\textsuperscript{$\spadesuit$} \\
    \textbf{Boxing Chen}\textsuperscript{$\clubsuit$}\quad 
    \textbf{Charles Ling}\textsuperscript{$\spadesuit	\bigstar$}\quad 
    \textbf{Boyu Wang}\textsuperscript{$\spadesuit
    % \footnotemark[1]
    \bigstar$}\thanks{Corresponding author: Boyu Wang.} \\
    \textsuperscript{$\spadesuit$}University of Western Ontario\quad\textsuperscript{$\heartsuit$}Universit\'{e} de Montr\'{e}al\quad\textsuperscript{$\diamondsuit$}Mila\\\textsuperscript{$\clubsuit$}Noah's Ark Lab	\quad \textsuperscript{$\bigstar$}Vector Institute
}

% \author{Qiuhao Zeng \\
% University of Western Ontario
% % \texttt{qzeng53@uwo.ca}
% \And
% Jerry Huang \& Peng Lu\thanks{Equal contribution.} \\
% Universit\'{e} de Montr\'{e}al
% % \texttt{\{jerry.huang,peng.lu\}@umontreal.ca}
% \AND
% Gezheng Xu \\
% University of Western Ontario
% % \texttt{gxu96@uwo.ca}
% \And
% Boxing Chen \\
% Noah's Ark Lab
% % \texttt{boxing.chen@huawei.com}
% \And
% Boyu Wang \& Charles X. Ling\thanks{Corresponding authors: \texttt{bwang@csd.uwo.ca,charles.ling@uwo.ca}} \\
% University of Western Ontario
% % \texttt{bwang@csd.uwo.ca,charles.ling@uwo.ca}
% }
% The \author macro works with any number of authors. There are two commands
% used to separate the names and addresses of multiple authors: \And and \AND.
%
% Using \And between authors leaves it to \LaTeX{} to determine where to break
% the lines. Using \AND forces a linebreak at that point. So, if \LaTeX{}
% puts 3 of 4 authors names on the first line, and the last on the second
% line, try using \AND instead of \And before the third author name.

\iclrfinalcopy % Uncomment for camera-ready version, but NOT for submission.
\begin{document}

\maketitle

\begin{abstract}
Over recent years, the Transformer has become a fundamental building block for sequence modeling architectures. Yet at its core is the use of self-attention, whose memory and computational cost grow quadratically with the sequence length $N$, rendering it prohibitively expensive for long sequences. A promising approach is top-$k$ attention, which selects only the $k$ most relevant tokens and achieves performance comparable to vanilla self-attention while significantly reducing space and computational demands. However, causal masks require the current query token to only attend to past tokens, preventing existing top-$k$ attention method from efficiently searching for the most relevant tokens in parallel, thereby limiting training efficiency. In this work, we propose \methodname, leveraging \textbf{Z}-Order Curves for \textbf{E}fficient \textbf{T}op-$k$ \textbf{A}ttention, to enable parallel querying of past tokens for entire sequences.
% in both space and time complexity of $\mathcal{O}(N \log N)$. 
We first theoretically show that the choice of key and query dimensions involves a trade-off between the curse of dimensionality and the preservation of relative distances after projection. In light of this insight, we propose reducing the dimensionality of keys and queries in contrast to values and further leverage $Z$-order curves to map low-dimensional keys and queries into \emph{one}-dimensional space, which permits parallel sorting, thereby largely improving the efficiency for top-$k$ token selection. Experimental results demonstrate that \methodname~matches the performance of standard attention on the synthetic \textsc{Multi-Query Associative Recall} task and outperforms attention and its variants on \textsc{Long Range Arena} and \textsc{WikiText-103} language modeling.

\end{abstract}

\section{Introduction}
Transformers~\citep{vaswani2017attention} have become indispensable for sequence modeling across various domains~\citep{openai_gpt-4_2023,zeng2024towards,zeng2024latent,zeng2023foresee,fang2022structure,fang2025benefits}, including natural language processing (NLP)~\citep{devlin2019bert,brown2020language, openai_gpt-4_2023, jiang_mixtral_2024}, computer vision~\citep{dosovitskiy2020image, DALL-E, sora}, etc. The foundation of Transformer models is the self-attention mechanism. This mechanism~\citep{bahdanau2016nmt}, inspired by recurrent neural networks (RNNs) and their ability to construct representations from all elements in a sequence, has revolutionized numerous fields, enabling breakthroughs in tasks such as language modeling~\citep{radford2019language}, machine translation~\citep{ott2018scaling}, text generation~\citep{brown2020language}, image classification~\citep{touvron2021training} and video generation~\citep{sora}. However, self-attention has a quadratic complexity in both memory and computation as the sequence length $N$ increases, which presents a significant challenge when scaling to long sequences~\citep{child2019generating, beltagy2020longformer}. This makes the direct application of self-attention in large-scale problems computationally prohibitive for many real-world applications, particularly when long sequences are involved~\citep{tay2020long}.

Recent advances have explored strategies to mitigate the inefficiencies of vanilla self-attention. One such approach is top-$k$ attention, which focuses computation on a subset of the most relevant tokens, significantly reducing memory and computation costs while maintaining competitive performance~\citep{kitaev2020reformer, gupta2021memory, bertsch2023unlimiformer, mao2024iceformer}. However, existing top-$k$ attention methods
% face several key challenges: i) Reformer
\citep{kitaev2020reformer,roy2021efficient} typically apply causal masking after selecting the top-$k$ tokens, causing earlier tokens ($i \ll N$) to often attend to nothing as their top-$k$ relevant tokens may include future tokens that are masked out by the causal mask.
% ii) Top-$k$ Memory Transformers~\citep{gupta2021memory} still require computation of the full attention matrix, resulting in an $\mathcal{O}(N^2)$ complexity; iii) 
% These methods process long sequences inefficiently with causal masks, as they require sequential processing of tokens. More specifically, models like 
Alternatively,~\citet{mao2024iceformer} process input tokens by token to exclude masked-out keys from the $k$-nearest neighbors search, preventing future tokens from influencing currently generated ones. Consequently, with causal masks, current top-$k$ attention approaches fail to fully leverage the parallel computation abilities of modern accelerators, limiting their efficiency in long-sequence modeling.

To overcome the limitations of existing top-$k$ attention methods, we introduce \methodname, a novel model designed to search for the top-$k$ tokens within chunked one-dimensional (one-dimensional) sorted key sequences projected via $Z$-order curves.
% , our approach enables efficient parallel processing across entire sequences. 
Specifically, our approach strikes a balance between mitigating the "curse of dimensionality" and preserving the relative distance of token represntations after projection by carefully selecting a lower dimensionality for keys and queries in contrast to values. This reduction allows the query and key to map to a one-dimensional space using $Z$-order curves, preserving proximity. As a result, \methodname~efficiently performs top-$k$ token selection in parallel within this one-dimensional space with accelerators.
% TODO:draw a figure, large dot production gives errors
% Additionally, we use a simple illustrative example to demonstrate that the Euclidean metric is more suitable for top-$k$ attention in low-dimensional spaces than the traditional dot-product metric. However, directly replacing dot-products with negative Euclidean distances degrades performance due to the negative signs. To address this, 
Additionally, since the aforementioned top-$k$ search is based on the Euclidean metric for low dimensional data, directly applying the traditional dot-product based softmax function is not appropriate. To address this, we propose an Adaptive Cauchy-Softmax mechanism that replaces the exponential function in the attention operation with a trainable Cauchy kernel~\citep{Bill86}. This enables dynamic adjustment of receptive fields across layers, providing greater flexibility in capturing both short and long-range dependencies.

Extensive empirical evaluations show that \methodname~matches the performance of standard self-attention on the Associative Recall task and consistently outperforms existing attention variants on the Long-Range Arena (LRA) and WikiText-103 datasets. We summarize our key contributions as follows:
%\methodname introduces the following key contributions:
% addresses the computational bottlenecks and limitations of sequential training in top-$k$ methods, offering an efficient and scalable solution for long-sequence tasks. In all, we illustrate the contributions of our proposed 1DFormer as follows:
\begin{itemize}
    \item \textbf{Efficient Parallel Top-$k$ Attention}: We introduce \methodname, a novel model that enables top-$k$ attention to operate in parallel across entire sequences, significantly improving training and inference efficiency with a time complexity of $\mathcal{O}(N \log N)$.

    \item \textbf{Dimensionality Selection for Key and Query Pairs}: We theoretically show that the dimensionality of keys and queries decides the trade-off between the curse of dimensionality and the preservation of relative distances for keys and queries.
    % , which can be used to improve computational efficiency.

    \item \textbf{$Z$-order Curve Integration}: By leveraging $Z$-order curves, we enable efficient top-$k$ token selection in one-dimensional space, allowing the use of parallel sorting algorithms on GPUs for faster attention computation.

    \item \textbf{Adaptive Cauchy-Softmax Mechanism}: We introduce Adaptive Cauchy-Softmax, a Softmax variant with trainable parameters based on the Cauchy kernel, dynamically adjusting receptive fields to enhance attention's flexibility.
\end{itemize}
% These contributions collectively enable scalable and efficient sequence modeling without sacrificing the performance of standard self-attention mechanisms.

\section{Related Works}
\textbf{Efficient Transformer} The Transformer architecture~\citep{vaswani2017attention} is foundational for sequence modeling, but its quadratic complexity limits efficiency with long sequences. Various efficient variants~\citep{survey,tay2020sparse,Skyformer,qincosformer,zhang2024the} have been proposed as alternatives, mainly categorized into sparse, low-rank, and linear transformers. Sparse transformers, such as BigBird~\citep{zaheer2020big} and Longformer~\citep{beltagy2020longformer}, restrict attention to local windows or global tokens to achieve linear complexity. SparseAxial~\citep{ho2020axial} further enhances this by combining sparse attention with axial mechanisms for high-dimensional inputs. Reformer~\citep{kitaev2020reformer}  locality-sensitive hashing to handle variable-length sequences efficiently. Low-rank transformers like Linformer~\citep{wang2020linformer} reduce the attention matrix to a lower-dimensional space, reducing memory and computation costs. Linear transformers such as Performer~\citep{choromanski2020rethinking} use kernel-based approximations for linear-time complexity, while Nyströmformer~\citep{xiong2021nystromformer} leverages Nyström decomposition for near-linear performance.

% {\color{red} Deja Vu: sparse attention. Adaptive Attention for Sparse-based Long-sequence Transformer; space complexity is O(N)!!}
\textbf{Top-$k$ Attention}~\citep{gupta2021memory} falls under the category of sparse attention, reducing attention complexity by selecting only the top-$k$ most relevant tokens at each layer, thereby focusing computational resources on the most critical interactions. Unlimiformer~\citep{bertsch2023unlimiformer} enables transformers to handle arbitrarily long sequences by chunking inputs and using a retrieval mechanism to attend to relevant past contexts. Similarly, IceFormer~\citep{mao2024iceformer} improves transformer efficiency by integrating a $k$-nearest-neighbor (\textsc{kNN}) search mechanism that focuses on the \textsc{kNN} results as the most relevant tokens during inference, bypassing the need to compute the full attention matrix. However, with causal masks, these approaches can not compute the outputs of a long sequences in parallel, making them less efficient for training models from scratch by not fully exploiting the parallel computing power of GPUs. In contrast, \methodname~performs \textsc{kNN}-based searches for relevant tokens in parallel across the entire sequence on GPUs using chunking techniques, enabling efficient training and inference with a time and space complexity of $\mathcal{O}(N\log N)$.

\section{Methodology}\label{sec:methodology}

\subsection{Preliminaries}\label{sec:preliminaries}
\textbf{Attention} has proven to be a fundamental building block in modern deep learning, particularly in natural language processing and sequence modeling tasks. It allows a model to focus on specific parts of the input sequence, thereby capturing dependencies within the sequence more effectively. In the standard formulation of attention~\citep{vaswani2017attention}, given a set of queries $\mQ\in \mathbb{R}^{N \times d_Q}$, keys $\mK\in \mathbb{R}^{N \times d_K}$, and values $\mV\in \mathbb{R}^{N \times d_V}$, the attention scores are computed as
\begin{equation}
    \text{Attention}(\mQ, \mK, \mV) = \text{softmax}\left(\mQ\mK^T/\sqrt{d_K}\right) \mV,
\end{equation}
% \begin{align}
% \label{eq: SA}
%     \text{SA}(\vq_i, \{\vk_j\}, \{\vv_j\}) = \sum_j\frac{\text{exp}(\vq_i^\top\vk_j/\sqrt{d})}{\sum_{j'}\text{exp}(\vq_i^\top \vk_{j'}/\sqrt{d})}\vv_j.
% \end{align}
% where $\vq_i, \vk_i, \vv_i$ are $d$ dimensional vectors.
where $d_K$, $d_Q$ and $d_V$ is the dimension of the keys, queries and values, respectively. It is common to use the same value for all three.

\noindent\textbf{Top-$k$ Attention} approaches aim to enhance the efficiency and flexibility of traditional self-attention mechanisms by focusing attention to only the most relevant tokens in a sequence. Both methods reduce the computational overhead associated with self-attention, especially for long sequences, by selectively attending to the most important tokens rather than computing attention scores across the entire sequence. This process lowers the computational complexity from $\mathcal{O}(N^2)$ in traditional self-attention to $\mathcal{O}(N \cdot k)$, where $N$ is the sequence length and $k$ is the number of selected tokens.
% In the context of top-$k$ attention, the attention mechanism operates by selecting only the most relevant tokens, reducing computational complexity while maintaining the effectiveness of attention.
The top-$k$ highest scores are selected for each query:
\begin{equation}
I_q = \left\{ i \mid \vq \mK_i^T / \sqrt{d_K} \geq \tau_k \right\}
\end{equation}
where $I_q$ represents the set of indices corresponding to the top-$k$ highest attention scores for the query $q$, $\tau_k$ is the threshold defined as the $k$-th highest attention score. The attention is then restricted to the tokens in this subset, reducing the number of operations required for long sequences. Specifically, the attention score for the top-$k$ tokens is recalculated using the self-attention mechanism but limited to the selected indices:
\begin{equation}
\text{Attention}_{\texttt{top-}k}(\vq, \mK, \mV) = \sum_{i\in I_q}\text{softmax} \left( \vq \mK_{i}^T / \sqrt{d_K} \right) \mV_i
\end{equation}
where $\vq$ denotes the query vector, $\mK_{i}$ and $\mV_i$ denote the key and value vectors corresponding to the top-$k$ relevant tokens, respectively.

% \noindent\textbf{$Z$-order Curve}~\citep{dugundji1989topology} also known as Morton order, is a method of mapping multidimensional data to one dimension while preserving the locality of the data points. This property makes it suitable for efficiently finding nearest neighbors in a high-dimensional space.
% The $Z$-order curve interleaves the bits of the coordinates of points in multidimensional space to produce a single scalar value that preserves the locality of the points. Given a point $\vx = (x_1, x_2, \ldots, x_d) \in \mathbb{R}^d$, each coordinate $\vx_i$ is first normalized and then discretized into an integer grid. The $Z$-order value is then computed by interleaving the bits of these integer coordinates. 

\noindent\textbf{$Z$-order Curves}~\citep{dugundji1989topology}, also known as Morton codes, provide a way to map multi-dimensional data into a one-dimensional space while preserving locality, whereas other dimensionality reduction methods~\citep{abdi2010principal,mcinnes2018umap} are not designed for mapping data to a 1D space. This approach is valuable in tasks that require efficient spatial indexing or key-query matching, such as attention. By maintaining the relative proximity of data points after projection, $Z$-order curves ensure that points that are close together in the original multi-dimensional space remain close in the projected one-dimensional space.

The $Z$-order curve interleaves the binary representations of each coordinate in a multi-dimensional point. For a point in the $d$-dimensional space with coordinates $\vx=(x_1, x_2, \dots, x_d)$, where each $\vx_i$ is a binary number, the $Z$-order curve computes a scalar value $Z$ by interleaving the bits of each coordinate. Given the binary representation of $x_i$ as $b_{i1}b_{i2} \dots b_{in}$, the $Z$-order curve is expressed as: 
\begin{equation}
Z = b_{11}b_{21} \dots b_{d1}b_{12}b_{22} \dots b_{d2} \dots b_{1n}b_{2n} \dots b_{dn}
\end{equation}
where $n$ refers to the number of bits used to represent each coordinate $x_i$ in its binary form. Through this interleaving of bits, the $Z$-order curve creates a scalar value that allows efficient sorting or indexing of points while approximately maintaining their original spatial relationships.

The primary advantage of $Z$-order curves is their ability to preserve locality. In other words, nearby points in the original multi-dimensional space have similar $Z$-values in the projected one-dimensional space. This property enables efficient search and selection processes in attention mechanisms or spatial indexing, where key-query pairs can be processed more efficiently in the one-dimensional space without significantly losing the locality information from the higher-dimensional space. {\color{black}$Z$-order curves are designed to preserve locality, and hence not suitable for dot-product similarity measures, which does not reflect locality.}

% Consider a 2D example with coordinates $(x, y)$. The process is as follows:
% \begin{itemize}
%     \item[1] Normalization and Discretization 
% \begin{equation}
%    \hat{x} = \left\lfloor \frac{(x - \min(x)) \cdot (2^b - 1)}{\max(x) - \min(x)} \right\rfloor, \quad \hat{y} = \left\lfloor \frac{(y - \min(y)) \cdot (2^b - 1)}{\max(y) - \min(y)} \right\rfloor,
%    \end{equation}
%    where $b$ is the number of bits for discretization, and $\min(x)$, $\max(x)$, $\min(y)$, and $\max(y)$ are the minimum and maximum values of $x$ and $y$ in the dataset.
%    \item[2] Interleaving Bits: The bits of $\hat{x}$ and $\hat{y}$ are interleaved to form a single $Z$-order value. For example, if $\hat{x} = 01$ and $\hat{y} = 10$, the interleaved result is $0101$.

%    \item[3] Generalization to Higher Dimensions:  For a point $\mathbf{x} \in \mathbb{R}^d$, the bits of all $d$ coordinates are interleaved to form the $Z$-order value.
% \end{itemize}

\subsection{Searching for the Top-$k$ Attended Tokens in One-Dimensional Space}
Since we project key and query vectors into a one-dimensional space using $Z$-order curves, using a large $d_K$ can still distort locality (as shown in \autoref{fig:locality_preservation}) and compromise the preservation of relative distances. Thus we ask whether $d_K$ can be reduced in a way such that even after mapping to one dimension, relative distances between tokens are maintained. Importantly, the key and query dimensions $d_K$ and $d_Q$ do not have to match the dimension of the values $d_V$. This is because $d_V$ should remain large to capture more semantic information, as seen with Gaussian distributions, where higher dimensionality increases the measure of information entropy~\citep{thomas2006elements}. Hence, as long as the relative distances between queries and keys are preserved, $d_K$ and $d_Q$ can be reduced.
% , significantly reducing computational costs. 
% Since the time complexity of a single attention operation is $\mathcal{O}(N^2 d_K^2)$, by setting $d_K$ to a smaller value, such as $d_K = \frac{d_V}{8}$, the time complexity can be reduced to $\frac{1}{64}$ of the original, yielding substantial computational savings. 
The following theoretical analysis provides insights into the selection of $d_K$. 
% considering the curse of dimensionality.

\subsubsection{Theoretical Analysis on $d_K$}
We first introduce the Johnson–Lindenstrauss Lemma~\citep{johnson1986extensions}, which states that data in high-dimensional space can be projected into a much lower-dimensional subspace using random projections while approximately preserving the pairwise distances between the points. Since random projections can preserve locality, this provides justification for setting a smaller $d_K$ with trainable projection functions for keys and queries, which could also preserve locality.

\begin{lemma}
\label{lemma:jl}
\textbf{(Johnson–Lindenstrauss Lemma)} For any $0 < \epsilon < 1$ and any integer $m$, let $d$ be a positive integer such that $d = \Omega (\frac{ \ln m}{\epsilon^2})$. Then for any set $x$ of $m$ points in $\mathbb{R}^D$, there exists a map $f: \mathbb{R}^D \to \mathbb{R}^d$ such that for all $x_i, x_j \in \mathcal{X}$,
\begin{align}
(1 - \epsilon) \|x_i - x_j\|^2 \leq \|f(x_i) - f(x_j)\|^2 \leq (1 + \epsilon) \|x_i - x_j\|^2
\end{align}
\end{lemma}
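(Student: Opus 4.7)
The plan is to prove the Johnson–Lindenstrauss Lemma via the probabilistic method, constructing a random linear map and showing that with positive probability it satisfies the distortion bound for all pairs simultaneously. I would define $f(x) = \tfrac{1}{\sqrt{d}} A x$, where $A \in \mathbb{R}^{d \times D}$ has i.i.d.\ entries drawn from $\mathcal{N}(0,1)$. Since the claim is existential, it suffices to show that a random $A$ satisfies all pairwise distortion inequalities with probability strictly greater than zero.

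First I would reduce the pairwise statement to a single-vector concentration statement. By linearity, for any fixed $u \in \mathbb{R}^D$ we have $\|f(u)\|^2 = \tfrac{1}{d} \sum_{\ell=1}^d \langle a_\ell, u\rangle^2$, where $a_\ell$ denotes the $\ell$-th row of $A$. Each $\langle a_\ell, u\rangle$ is distributed as $\mathcal{N}(0, \|u\|^2)$, so $\|f(u)\|^2 / \|u\|^2$ is a rescaled chi-squared random variable with $d$ degrees of freedom and mean $1$. The core technical step is then to establish the concentration bound
\begin{equation}
\Pr\!\left[\,\bigl|\|f(u)\|^2 - \|u\|^2\bigr| > \epsilon \|u\|^2\,\right] \leq 2 \exp\!\left(-c\, \epsilon^2 d\right)
\end{equation}
for some absolute constant $c>0$, which I would obtain by a standard Chernoff argument on the moment generating function of a chi-squared variable, applied separately to the upper and lower tails.

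Next I would apply the union bound. Setting $u = x_i - x_j$ and ranging over all $\binom{m}{2} < m^2/2$ pairs, the probability that any pair violates the distortion inequality is at most $m^2 \exp(-c\, \epsilon^2 d)$. Choosing $d \geq C \ln m / \epsilon^2$ for a sufficiently large constant $C$ (which is precisely the $\Omega(\ln m / \epsilon^2)$ condition in the statement) drives this probability strictly below $1$. Hence there exists at least one realization of $A$ for which the map $f$ simultaneously satisfies the stated inequalities for all pairs $x_i, x_j \in \mathcal{X}$.

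The main obstacle is the chi-squared concentration step: deriving the exponential tail bound requires careful manipulation of the moment generating function $\mathbb{E}[\exp(t(\|f(u)\|^2 - \|u\|^2))]$, optimizing over $t$, and handling the upper and lower tails separately because the chi-squared distribution is asymmetric. Everything else (the reduction to a single vector, the union bound, and the choice of $d$) is routine once this concentration inequality is in hand. A minor subtlety worth flagging is that the lemma as stated does not fix a distribution for $f$; I would note that Gaussian entries are one convenient choice, and that the same proof goes through with sub-Gaussian entries such as $\pm 1/\sqrt{d}$ Rademacher variables, which is useful in practice though not needed for the existence claim here.
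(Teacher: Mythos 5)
Your proof is correct: this is the standard probabilistic-method argument for Johnson--Lindenstrauss (random Gaussian projection, chi-squared concentration via Chernoff on the moment generating function, then a union bound over the $\binom{m}{2}$ pairs), essentially the Dasgupta--Gupta style simplification of the original proof. One thing to be aware of, though: the paper does not prove this lemma at all. It is imported as a known classical result (cited to Johnson and Lindenstrauss) and used as a black box inside the proof of the paper's main theorem, where $\epsilon$ is instantiated as $\sqrt{C\ln m / d_K}$. So there is no in-paper argument to compare against; your write-up supplies a proof where the paper supplies a citation. The proof you sketch is sound and complete in outline --- the only step requiring real work is the tail bound $\Pr\bigl[\,\bigl|\|f(u)\|^2 - \|u\|^2\bigr| > \epsilon\|u\|^2\,\bigr] \leq 2\exp(-c\,\epsilon^2 d)$, and your plan to handle the upper and lower chi-squared tails separately via the MGF is exactly how that is done. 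Your closing remark that sub-Gaussian (e.g.\ Rademacher) entries also work is accurate and, if anything, more than the paper needs, since the paper only invokes existence of the map $f$.
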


The following assumption then provides a mathematical depiction that attention weights are constrained within an $m$-dimensional simplex, and the learnable similarity function $\Gamma$ outputs the attention scores, ensuring the most relevant tokens are emphasized during the information aggregation process. This reflects the primary goal of attention: to aggregate critical information for more accurate predictions. 
% By learning to weight key tokens effectively, the attention mechanism improves its capacity to focus on the most informative parts of the input sequence.

% This following assumption establishes a mathematical framework for the attention mechanism, where the attention weights are constrained to lie in an $m$-dimensional simplex. 
% It further assumes that the model can learn an optimal similarity critic function to compute attention scores, which are then used to minimize the expected risk, ultimately improving the accuracy of the attention-based predictions.
\begin{assumption}\label{assum:optimal-attn}
    Let $\alpha \in \Delta_{m-1}$ be an element of the $m$-dimensional simplex, defined as $\Delta_{m-1} \triangleq \left\{\alpha \in \mathbb{R}^m \mid \alpha_i \geq 0, \sum_{i=1}^{m} \alpha_i = 1\right\}$. Assume that $h_{\mathrm{attn}}$ equipped with $\alpha$ can achieve an optimal learnable similarity critic function ${\Gamma}$, where the attention scores are given by $\alpha = \mathrm{softmax}\left(\Gamma\left(f_k(x_i), f_q(x)\right)\right)$, such that $\Gamma$ is trained to be optimal to have the minimal expected risk: $ \min_{\alpha}\|h_{\mathrm{attn}}(x,S_x;\alpha) - y\|$, where $h_{\mathrm{attn}}$ denotes the attention-based hypothesis, $x$ is the input, and $S_x$ is the context.
\end{assumption}
% Above assumption assumes that the learned similarity function $\gamma$ can make the attention scores $\alpha$ minimize the expected risk.

% The following lemma provides insight into the probability that subsets are disjoint based on the size of the sample set, and it plays a crucial role in establishing the result of our final theorem.
\begin{comment}
\begin{lemma}\label{lemma:monotone}
    Let $\alpha\in \mathbb{R}^m$, $\forall h_1:\mathbb{R}^m\rightarrow \mathbb{R}$ and $\forall h_2:\mathbb{R}^m\rightarrow \mathbb{R}$. Assume that $ h_1(\alpha)\le h_2(\alpha)$, and then $\min_{\alpha} h_1(\alpha)\le \min_{\alpha} h_2(\alpha)$.
\end{lemma}
\end{comment}
% \begin{lemma}\label{lemma:sets-intersect}Let $C_1, \ldots, C_r$ be a collection of subsets of some domain set, $\mathcal{X}$. Let $S$ be a sequence of $m$ points sampled i.i.d. according to some probability distribution, $\mathcal{D}$, over $\mathcal{X}$. Then,
% $$
% \underset{S \sim \mathcal{D}^m}{\mathbb{E}} \left[ \sum_{i: C_i \cap S = \emptyset} \mathbb{P}[C_i] \right] \leq \frac{r}{m e}.
% $$
% \end{lemma}
\autoref{thm:main} highlights the importance of choosing $d_K$ carefully, as it controls for a trade-off between locality preservation and the curse of dimensionality. Larger $d_K$ allow for more detailed feature capture at the cost of the high-dimensional curse, leading to increased complexity. On the other hand, a smaller $d_K$ loses locality between tokens, which is crucial for efficient query. The bounds provided give valuable insights into the underlying mechanisms of attention and can guide future designs of more efficient attention models. For simplicity, we assume WLOG that keys and queries share the same projection functions, as~\citet{kitaev2020reformer}.
% The following bounds 
% not only derive generalization guarantees for the current success of autoregressive transformer-based models, but also 
% provide insights into the underlying mechanisms of attention, which can guide future designs of more efficient attention mechanisms. For simplicity, we assume without loss of generality that keys and queries share the same projection functions (following~\citet{kitaev2020reformer}).
% , Reformer, proposed the shared-QK Transformer that shares the linear projection layer for keys and queries. It reduces the number of parameters and memory space while not affecting the model performance. https://arxiv.org/abs/2001.0445a very simple but efficient technique.
\begin{theorem}\label{thm:main}
Let $\mathcal{X} \in \mathbb{R}^d$, $\mathcal{Y} \in \mathbb{R}^D$, and $\mathcal{D}$ be a distribution over $\mathcal{X} \times \mathcal{Y}$ for which the conditional probability function, $h: \mathbb{R}^d \rightarrow \mathbb{R}^{D}$, is a $l$-Lipschitz function. Let $h$ denote a hypothesis, and $h_{\mathrm{attn}}$ denote the one-layer attention model to aggregate the predictions of a sample set $S \sim \mathcal{D}^m$ to predict for another i.i.d sample $x$. Specifically, here we assume the same linear map for a key mapping $f_k$ and a query mapping $f_q$ as $f:\mathbb{R}^d \rightarrow [-B,B]^{d_K}$ where $B$ is the bound of projection of $x$ by $f$, and we assume the value mapping $f_v$ be the Bayes optimal rule $h^*=\mathbb{E}[Y|X=x]$, which is the hypothesis that minimizes $L_{\mathcal{D}}(h)$ over all functions. Then, the expected risk of the regression tasks $L_{\mathcal{D}}(h) = \mathbb{E}_{(x,y) \sim \mathcal{D}}\|h(x) - y\|$ with $h$ as $h_{\mathrm{attn}}$ can be upper bounded by
$$\underset{S \sim \mathcal{D}^m}{\mathbb{E}} \left[L_{\mathcal{D}}(h_{\mathrm{attn}})\right] \leq L_{\mathcal{D}}(h^*) +\frac{4lc \sqrt{d_K}B  m^{-1/(d_K+1)}}{\sqrt{1-\sqrt{\frac{C\ln m}{d_K}}}}$$
\end{theorem}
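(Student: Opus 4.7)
The plan is to read the attention output, under the choice $f_v = h^*$, as a weighted Nadaraya--Watson-style estimator $h_{\mathrm{attn}}(x, S; \alpha) = \sum_{i=1}^m \alpha_i\, h^*(x_i)$, and then to invoke Assumption~\ref{assum:optimal-attn} to replace the optimal softmax weights by a concrete, analyzable choice. Specifically, since the optimal $\alpha^\star$ minimizes the risk, it suffices to upper bound $L_{\mathcal{D}}(h_{\mathrm{attn}})$ at any particular $\tilde\alpha$. I would pick $\tilde\alpha$ concentrated at the projected nearest neighbor $i^\star = \arg\min_{j} \|f(x_j) - f(x)\|_2$ in the $d_K$-dimensional key space (an optimal $\Gamma$ with a sufficiently peaked softmax can realize this in the limit). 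Under this $\tilde\alpha$, $h_{\mathrm{attn}}(x,S;\tilde\alpha) = h^*(x_{i^\star})$, so the analysis reduces to that of a $1$-NN regressor performed in the projected space.

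Next, I would apply the triangle inequality and the $l$-Lipschitzness of the conditional mean $h$ to peel off a Bayes term:
\begin{align*}
\|h_{\mathrm{attn}}(x,S;\tilde\alpha) - y\| \le \|h^*(x_{i^\star}) - h^*(x)\| + \|h^*(x) - y\| \le l\,\|x_{i^\star} - x\| + \|h^*(x) - y\|.
\end{align*}
Taking expectation over $(x,y)\sim\mathcal{D}$ and $S\sim\mathcal{D}^m$ yields $\mathbb{E}_S L_{\mathcal{D}}(h_{\mathrm{attn}}) \le L_{\mathcal{D}}(h^*) + l\cdot \mathbb{E}_{S,x}\|x_{i^\star} - x\|$. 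The task now is to control the mean nearest-neighbor distance $\mathbb{E}\|x_{i^\star} - x\|$, with the twist that $i^\star$ was selected in the projected $d_K$-space, not in the original $\mathbb{R}^d$.

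To cross back from projected distance to ambient distance, I would apply Lemma~\ref{lemma:jl} with $\epsilon = \sqrt{C\ln m/d_K}$ (so that the required $d_K = \Omega(\ln m / \epsilon^2)$ holds), obtaining, for a suitable $f$ realizable by the trainable projection,
\begin{align*}
\|x_{i^\star} - x\| \le \frac{\|f(x_{i^\star}) - f(x)\|}{\sqrt{1-\epsilon}} = \frac{\|f(x_{i^\star}) - f(x)\|}{\sqrt{1-\sqrt{C\ln m/d_K}}}.
\end{align*}
I would then bound the expected $1$-NN distance of $m$ i.i.d.\ points in the cube $[-B,B]^{d_K}$ by the classical volume/covering argument: using $\mathbb{E} R = \int_0^\infty \Pr(R > r)\,dr$ with $\Pr(R > r) \le (1 - c' (r/B)^{d_K})^m$ and a change of variables gives $\mathbb{E}\|f(x_{i^\star}) - f(x)\| \le c\sqrt{d_K}\, B\, m^{-1/(d_K+1)}$, where the exponent $d_K+1$ arises from the integral $\int_0^1 (1-u^{d_K})^m\,du$. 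Multiplying the two displays, scaling by $l$, and absorbing the factor coming from the high-probability event in Lemma~\ref{lemma:jl} into the constant $4$ produces the claimed bound.

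The main obstacle I anticipate is the last step: coupling the JL guarantee (which holds for a \emph{fixed} finite set with high probability over the projection) with the expectation over the random training sample $S$ and test point $x$, while simultaneously controlling the expected nearest-neighbor distance. In particular, the 1-NN pair $(x_{i^\star}, x)$ is data-dependent, so one must either invoke JL on the whole $m{+}1$-point set and take a union bound, or integrate the failure probability into the constants. Getting the $\sqrt{d_K}\,B$ prefactor tight and tracking how the $(1-\epsilon)^{-1/2}$ factor interacts with the NN volume integral---without an extra logarithmic blow-up---will likely require the most bookkeeping; everything else is a standard triangle-inequality and Lipschitz reduction.
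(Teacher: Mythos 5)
Your overall strategy coincides with the paper's proof: with $f_v=h^*$ the attention output is $\sum_i \alpha_i h^*(x_i)$; the optimality of $\alpha$ (Assumption~\ref{assum:optimal-attn}) lets one pass to any particular choice of weights, and the particular choice is the projected nearest neighbor (the paper reaches it by observing $\min_{\alpha\in\Delta_{m-1}}\sum_i\alpha_i\|k_i-q\| = \|k_{\pi_1(q)}-q\|$, i.e., the minimum of a linear function over the simplex sits at a vertex --- the same thing you do by fiat); one then peels off $L_{\mathcal{D}}(h^*)$ by the triangle inequality, uses $l$-Lipschitzness, inverts Lemma~\ref{lemma:jl} with $\epsilon=\sqrt{C\ln m/d_K}$ to pass between ambient and projected distances, and finishes by bounding the expected nearest-neighbor distance of $m$ points in $[-B,B]^{d_K}$ at rate $\sqrt{d_K}\,B\,m^{-1/(d_K+1)}$.

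The one place you genuinely deviate is that last step, and your version of it has a gap. You propose the tail bound $\Pr(R>r)\le\bigl(1-c'(r/B)^{d_K}\bigr)^m$ for the projected nearest-neighbor distance $R$. This presumes that every ball of radius $r$ centered at a support point carries mass at least $c'(r/B)^{d_K}$, i.e., a density lower bound, which an arbitrary distribution on the cube need not satisfy. Concretely, take $d_K=1$ on $[0,1]$ with density $\eta$ on $[0,\tfrac12]$ and $2-\eta$ on $(\tfrac12,1]$, with $\eta=1/m$: for a query in the low-density half, $\Pr_S(R>r\mid x)=(1-\Theta(\eta r))^m\approx e^{-\Theta(r)}$, a constant in $m$, while your bound forces decay like $e^{-c'rm}$; averaging over $x$ does not rescue the inequality. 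The conclusion $\mathbb{E}[R]=\mathcal{O}(\sqrt{d_K}\,B\,m^{-1/(d_K+1)})$ is still true, but it must be proved the way the paper does (following Shalev-Shwartz and Ben-David): cover $[-B,B]^{d_K}$ by $r=(1/\zeta)^{d_K}$ boxes of side $\zeta$, split according to whether the query's box is hit by the sample, and invoke Lemma~\ref{lemma:sets-intersect}, $\mathbb{E}_S\bigl[\sum_{i:C_i\cap S=\emptyset}\mathbb{P}[C_i]\bigr]\le r/(me)$, which averages over the query location and requires no density assumption; choosing $\zeta=2m^{-1/(d_K+1)}$ gives $\tfrac{1}{e}+2\le 4$, which is where the constant $4$ in the statement actually comes from (not, as you guessed, from absorbing a JL failure probability). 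Your closing worry about coupling the JL guarantee to the random, data-dependent point set is legitimate, but note that the paper does not resolve it either: it simply invokes Lemma~\ref{lemma:jl} as an existence statement for the realized sample and identifies the trained projection with that map.
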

This indicates that $d_K$ should be carefully chosen rather than simply being set as equal to $d_V$ (as is common practice). Empirically, we show that the dimension of query and key can decreased without degrading performance, more specifically in~\autoref{fig:dk_associative_recall} in~\autoref{sec:dimdk} which we later discuss.
% , and additionally, the top-$k$ samples remain consistent even after applying the $Z$-order curve transformation to one-dimensionalimension.

\subsubsection{Top-$k$ Search in One Dimension}

As the sequence length becomes extremely large, iterating through the entire context history to search for the top $k$ tokens becomes infeasible. Ideally, searching should be efficient and aim to achieve the optimal time complexity of $\mathcal{O}(N \log N)$, similar to general sorting problems with arbitrary inputs. To achieve this, we map both keys and queries into a one-dimensional space using the $Z$-order curve and sort them with a sorting operation that can be executed in parallel on an accelerator, e.g. the \texttt{torch.sort} operator in PyTorch~\citep{pytorch}. The insertion position of a query in the key sequence can then be found using a binary search (e.g. with \texttt{torch.searchsorted}), allowing us to retrieve the top-$k$ attended tokens using a window centered on the insertion position.

Specifically, the key and query dimensions are set to be significantly smaller than that of the values, i.e. $d_K = d_Q \ll d_V$. While values need a high dimensionality to carry rich semantic information, keys and queries primarily serve to preserve relative distances, which can be achieved with much lower dimensionality as argued by Lemma~\ref{lemma:jl}. To facilitate the fast retrieval of queries from keys, we leverage sorting, which can be efficiently parallelized, after mapping queries and keys into one-dimensional space via the $Z$-order curve. We define $\mQ, \mK \in \mathbb{R}^{B \times N \times d_K}$, where $B$ is the batch size and $N$ is the sequence length. The $Z$-order transformation is applied as follows: 
\begin{equation*}
    \mQ_z = \text{$Z$-order}(\mQ), \quad \mK_z = \text{$Z$-order}(\mK)   
\end{equation*}
where $\mQ_z$ and $\mK_z$ are the one-dimensional representations of $\mQ$ and $\mK$, respectively.

With causal masks, directly collecting the top-$k$ tokens from the entire sequence is not guaranteed to have plenty of tokens to make inference. For instance, for a query at position 32 in a sequence of length 2048, selecting the top 16 tokens from the entire sequence followed by causal masking would leave approximately $\frac{32}{2048}\times 16=\frac{1}{4}$ tokens to attend to, effectively leaving no tokens available for the query. Consequently, the query at this position would not attend to anything, rendering top-$k$ attention ineffective. To enable parallel $k$-nearest neighbors ($k$NN) searching considering causal masks, we first sort the $Z$-order keys and divide them into chunks. For query $i$ in the $m$-th chunk (where $m = \lfloor i / M \rfloor$ and $M$ is the chunk size), we restrict search to the first $m$ chunks, indexing the original unsorted keys from 0 to $m \times M - 1$ in the sorted list. This ensures that future keys $j \in \{j : j > m \times M\}$ are excluded, in accordance with the causal mask requirements. This process is performed in parallel for every query.

Next, we perform the nearest neighbor search in these one-dimensional $Z$-order spaces. For each query, the insertion position is first found using a binary search and followed by selecting the nearest keys using a window of size $K$ centered around the insertion point to collect top-$k$ tokens as $I_q$, denoting the indices of the $k$-nearest neighbors. This ensures efficient and accurate retrieval while maintaining the constraints imposed by causal masking.

\subsection{Adaptive Cauchy-Softmax}

% Two common metrics are negative Euclidean distance~\citep{mccarter2023inverse} and dot product similarity. 
% Top-$k$ token retrieval requires a similarity measure between data points. We prefer the Euclidean metric for top-$k$ attention with a small $d_K$ for two reasons. 
% First, we posit the Euclidean metric is more effective for top-$k$ attention for low-dimensional data through an example (\autoref{fig:wrapped_figure}). This example demonstrates that Euclidean distance is more reliable for classification tasks in one-dimensional space for the top-$k$ methods, with the negative Euclidean distance-based metric correctly identifying the class, leading to a more accurate prediction, while the dot product can be misleading, leading to incorrect predictions. The second is that the $k$-NN search is normally applied with Euclidean metric, dot production needs the undesirable normalization operation, which loses the magnitudes of the tokens.
% top-$k$ attention 
Top-$k$ token searching relies on a similarity metric between data points, and we prefer the Euclidean metric for top-$k$ attention with small $d_K$ for two main reasons. First, as illustrated in Figure~\ref{fig:wrapped_figure}, Euclidean distance is more effective for low-dimensional data in top-$k$ methods: it reliably identifies the correct class in one-dimensional classification tasks, leading to accurate predictions, whereas the dot product can be misleading (samples in the 
\begin{wrapfigure}{r}{0.45\textwidth} % 'r' for right, 'l' for left
    \centering
    % \vspace{-1cm}
    \includegraphics[width=\linewidth]{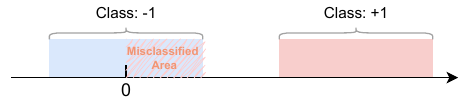} % Replace with your image path
    \caption{Illustration of attention using Euclidean distance vs. dot product. Euclidean distance correctly classifies points into classes $\pm$1, while the dot product leads to a misclassified area.}
    \label{fig:wrapped_figure}
    % \vspace{-0.5cm}
\end{wrapfigure} \ misclassified area will be classified as ``+1" using the dot-product metric). Second, $k$-NN search is typically based on the Euclidean metric, while using the dot-product requires normalization that loses token magnitudes.

To better align with the Euclidean measure for low-dimensional representations, we propose the Adaptive Cauchy Softmax function, to replace the exponential function in traditional attention mechanisms. The Cauchy kernel, with its heavier tails, ensures that distant tokens retain influence, overcoming the limitations of the exponential function, which suppresses distant tokens~\citep{relu_attention}. This allows the attention mechanism to capture both local and global dependencies, with the shape of the kernel determining how key vectors influence the query.
% In traditional attention mechanisms, the softmax function employs an exponential function, which suits the dot-product between query and key vectors. However, to introduce more flexibility and adaptiveness in the attention mechanism, we propose replacing the exponential function of dot-productions with a Cauchy kernel function, which aligns more effectively with the Euclidean metric. 
Specifically, the Adaptive Cauchy-Softmax between a query vector $\vq$ and keys $\mK$ is computed as:
\begin{equation}\label{eq:cauchy}
\text{softmax}_c(\vq, \mK) =\frac{ \frac{\gamma}{\pi}  \left[\|\vq - \mK_i\|^2 + \gamma^2\right]^{-1}}{\sum_{j\in I_q}\frac{\gamma}{\pi}  \left[\|\vq - \mK_j\|^2 + \gamma^2\right]^{-1}}=\frac{  \left[\|\vq - \mK_i\|^2 + \gamma^2\right]^{-1}}{\sum_{j\in I_q} \left[\|\vq - \mK_j\|^2 + \gamma^2\right]^{-1}}
\end{equation}
where $\gamma$ is a trainable parameter that controls the shape of the distribution. By training a task-specific $\gamma$ for each attention layer, the model adjusts receptive fields dynamically. We define $\gamma^2$ as the output of a sigmoid function applied to a trainable parameter, to ensure a range of $[0,1]$, with smaller values sharpening attention and improving focus on relevant inputs~\citep{devil_linear_attn, zhang2024the}, while larger values allow for smoother attention. The adaptive Cauchy softmax effectively handles long-range dependencies, preventing entropy collapse ~\citep{zhai2023stabilizing} or explosion~\citep{zhang2024the} and adaptively balancing attention across the sequence.

\subsection{Sparse Attention with $Z$-order Curve for Efficient \textsc{kNN} Retrieval}

Instead of calculating full attention scores, which is computationally expensive and memory intensive, we compute sparse attention scores by leveraging the $Z$-order curve for efficient nearest neighbor retrieval. $\methodname$ is then computed as below, according to \autoref{eq:cauchy}:
\begin{equation}
\mathrm{Attention}_{\texttt{\methodname}}(\mQ, \mK, \mV) = \sum_{i \in I_q} \mathrm{softmax}_c\left(\mQ,\mK_i\right) \mV_i
\end{equation}
where $\mK_i$ and $\mV_i$ are the corresponding keys and values for the indices $i \in I_q$. As a result of the sparsity of top-$k$, most of the tokens will not join the predictions, which stops the gradient back-propagated through the low-probabilities tokens and fails to leverage this current prediction's information. We append the mean vector of the history tokens to the top-$k$ tokens matrix using cumsum function in $I_q$, to keep the attention from assigning zero probability, which can be regarded as smoothing in n-gram language model~\citep{jm3}.

\section{Experimental Results}
We evaluate \methodname’s performance on several aspects: \methodname’s ability to solve the synthetic \textsc{Multi-Query Associative Recall} (\textsc{MQAR}) task~\citep{arora2024zoology}, long sequence modeling ability on the \textsc{Long Range Arena} (\textsc{LRA}) benchmark and auto-regressive language modeling on \textsc{WikiText-103}. Then we conduct extensive analysis experiments: an ablation study examining the influence of dimensionality on attention model performance (\autoref{sec:dimdk}), ablations on various Euclidean-based Softmax operators (\autoref{sec:euclidean}), the empirical results of locality preservation using $Z$-order curves (\autoref{sec:locality}) and an ablation study over the number of $k$ in \methodname~(\autoref{sec:top-$k$}).

\subsection{Empirical Validation}
\paragraph{Associative Recall.} Associative recall tasks~\citep{arora2024zoology} have been popular for testing the ability of language models to look up information in
their context. Broadly, they involve feeding auto-regressive model pairs of key-value associations and then prompting the
model to produce the correct completion upon being shown a previously seen key. The \textsc{Multi-Query Associative Recall}
(\textsc{MQAR}) task is a particular formulation of this task that requires the model to memorize multiple associations~\citep{arora2024zoology}. We evaluate the performance of various models on the Associative Recall task, a classical sequence-to-sequence task that requires the model to recall the first token associated with a target token after processing a long sequence. The task is tests the ability of models to capture long-range dependencies and to maintain information over time.

We compare the performance of four models: a vanilla Transformer, Performer~\citep{choromanski2020rethinking}, \textsc{Based}~\citep{arorasimple}, and \methodname, with different model dimensions (32, 64, 128, and 256). As illustrated in \autoref{fig:associative_recall}, accuracy increases as the model dimension grows. Attention and Based models show strong performance with higher dimensions, achieving nearly perfect accuracy for dimensions larger than 64. \methodname follows a similar trend and achieves competitive performance, especially for larger model dimensions, with perfect accuracy at dimension 256. In contrast, the Performer struggles, showing significantly lower accuracy across all dimensions.

% This highlights the critical role of model capacity in capturing and maintaining long-range dependencies. While models like Attention, Based, and \methodname~benefit from increased model dimensions, Performer’s kernel-based attention appears less effective in handling the complexity of this task.

\begin{figure*}[t]
% \subfloat{\includegraphics[width=0.16666\textwidth]{fig/rainbow.pdf}}
\subfloat{\includegraphics[width=0.24\textwidth]{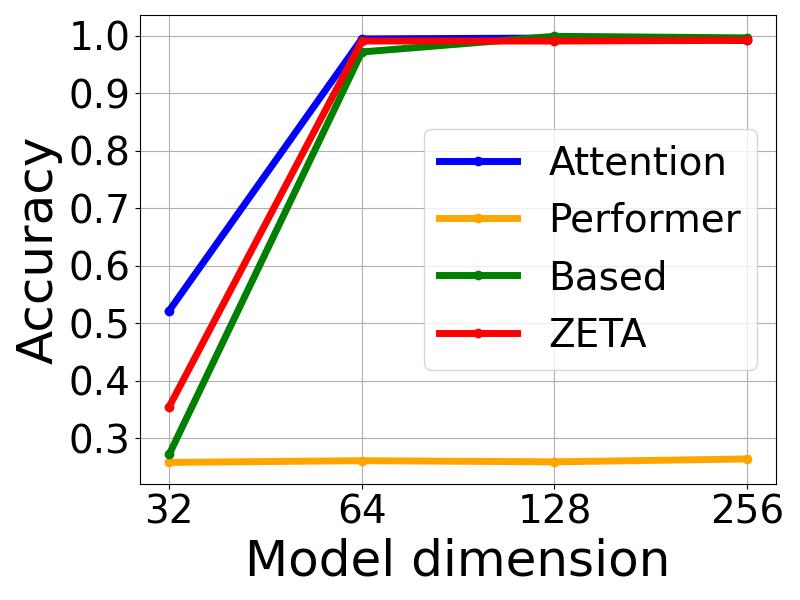}\label{fig:associative_recall}}
\subfloat{\includegraphics[width=0.24\textwidth]{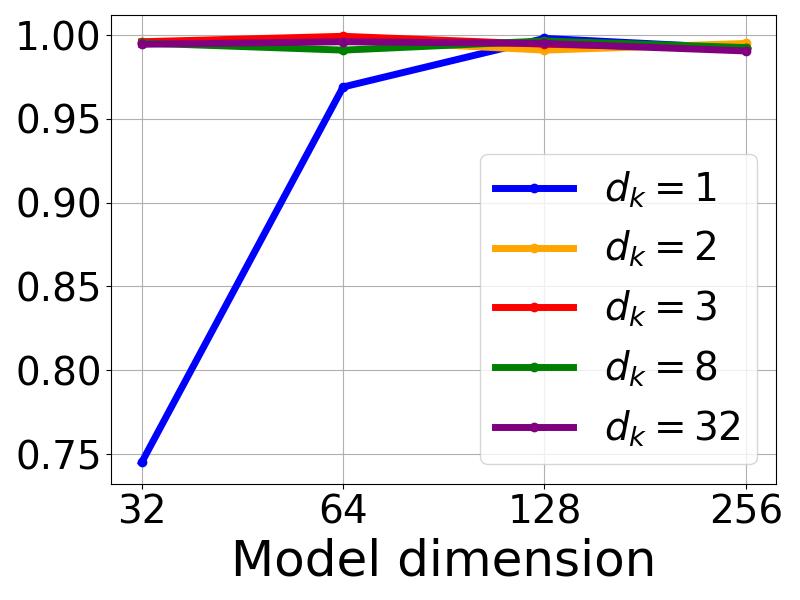}\label{fig:dk_associative_recall}}
\subfloat
{\includegraphics[width=0.24\textwidth]{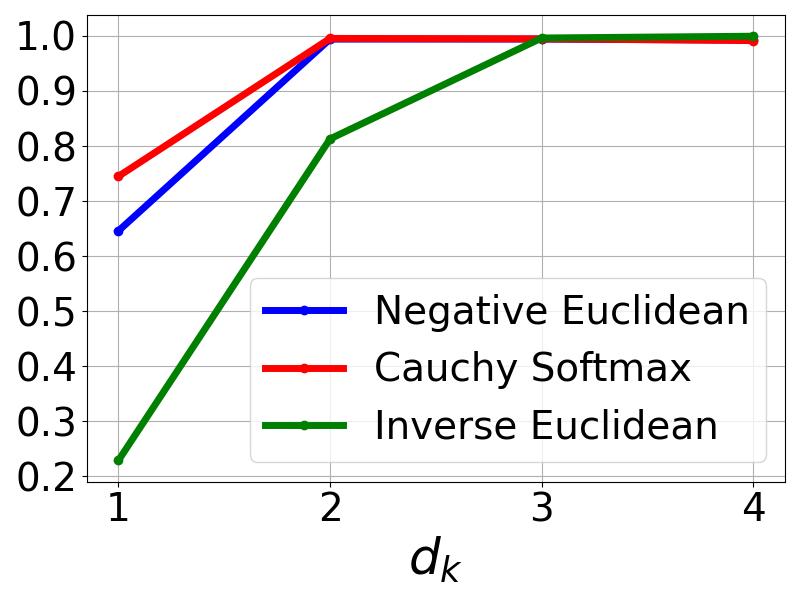}\label{fig:euclidean_softmax}}
\subfloat
{\includegraphics[width=0.24\textwidth]{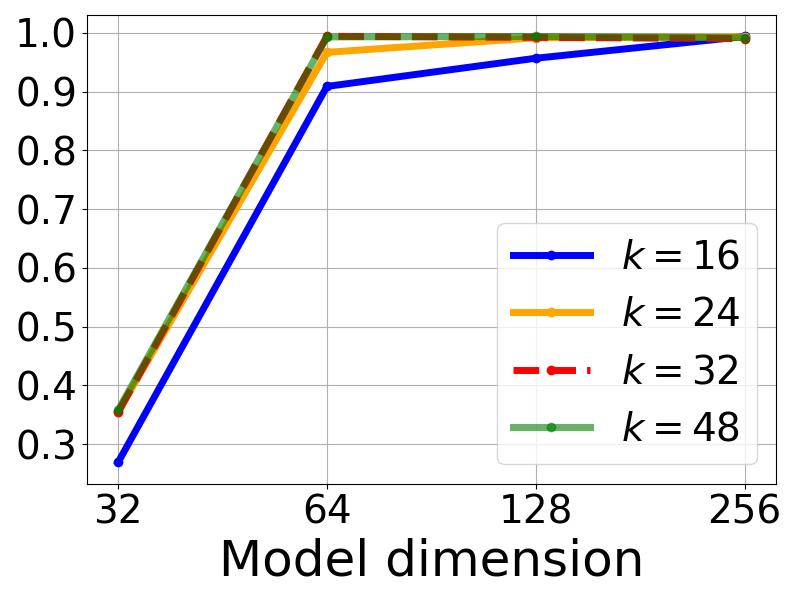}\label{fig:top_k_ablation}}
% (a) First subplot are different domains indexed with the rainbow color. 
\caption{Experiments on Associative Recall: (a) Model Accuracy (b) Performance of Transformer with varying $d_K$ across different model dimensions; even with low $d_K$, the model achieves near-perfect performance (c) Comparison of different Euclidean-based Softmax operators across varying key-query dimensions $d_K$ (d) Ablation on $k$ in \methodname.}
    \vspace{-0.3cm}
\label{fig:AR}
\end{figure*}

\paragraph{Long Range Arena (LRA).} The \textsc{Long Range Arena} (\textsc{LRA}) benchmark~\citep{tay2020long} is a comprehensive suite designed to evaluate the performance of models on long sequence tasks. It includes tasks that span across multiple domains, such as natural language processing, image classification, and mathematical reasoning. \textsc{LRA} focuses on sequence classification, challenging models to efficiently process longer input sequences while capturing long-range dependencies, providing an ideal testbed for Transformer models and their efficient variants.

\begin{wraptable}{r}{0.45\textwidth}
    \centering
    \caption{Test perplexity (lower is better) on \textsc{WikiText-103}.}
    \resizebox{\linewidth}{!}{
    \begin{tabular}{lcc}
        \toprule
        \textbf{Model} & \textbf{Params} & \textbf{Test PPL} \\
        \midrule
        Vanilla Transformer & 125M  & 26.2 \\
        Performer & 125M & 26.8 \\
        Reformer & 125M & 25.6 \\
        AFT-conv & 125M & 28.2 \\
        Linear Transformer & 125M & 30.2 \\
        RFA-Gaussian & 125M & 27.5 \\
        CosFormer & 125M &\textbf{23.1} \\
        \midrule
        \methodname~& 124M & 26.3\\
        \bottomrule
    \end{tabular}
    }
    \label{tab:perplexity_results}
\end{wraptable}
LRA consists of five key tasks: \textsc{ListOps}, \textsc{Text}, \textsc{Retrieval}, \textsc{Image}, and \textsc{PathFinder}. Each task evaluates different aspects of long-range dependency handling such as the ability to handle mathematical reasoning tasks on long sequences of operations (\textsc{ListOps}), capture dependencies over long textual inputs (\textsc{Text}), retrieve relevant elements from a long sequence (\textsc{Retrieval}) and capture spatial dependencies (\textsc{Image}). \textsc{PathFinder} presents a difficult problem where models must distinguish between connected and disconnected paths within maze-like patterns. A modified version of \textsc{PathFinder}, called \textsc{PathFinder-X}, is also included where the patterns are presented in a larger image (256$\times$256 compared to 32$\times$32) but has yet to be solved by existing attention-based methods.

We evaluate various Transformer-based models, including several linear and efficient variants, trained from scratch on the LRA sequence classification tasks. For each model, we adopt the same hyperparameter settings provided by the official LRA benchmark~\citep{tay2020long} to ensure a fair comparison. Results are summarized in \autoref{tab:lra_results}, which compares the performance of the models across all five tasks, along with their average accuracy, showing that \methodname~significantly outperforms other attention-based models.

\vspace{0.1cm}
\begin{table}[h]
    \centering
    \caption{Results of the Transformer and various variants on LRA. We consistently outperform the next closest competitor~\citep{zhu-soricut-2021-h}.}
    \resizebox{0.8\textwidth}{!}{
    \begin{tabular}{l|ccccc|c}
        \toprule
        \textbf{Model} & \textbf{ListOps} & \textbf{Text} & \textbf{Retrieval} & \textbf{Image} & \textbf{Pathfinder} & \textbf{Average} \\
        \midrule
        Transformer & 36.37 & 64.27 & 57.46 & 42.44 & 71.40 & 54.39 \\ 
        % Flash Attention & 37.64 & 63.93 & 81.42 & 43.59 & 72.73 & 59.84 \\
        Reformer & 37.27 & 56.10 & 53.40 & 38.07 & 68.50  & 50.67 \\
        Sparse Trans & 17.07 & 63.58 & 59.59 & 44.24 & 71.71 & 51.24 \\
        Sinkhorn Trans & 33.67 & 61.20 & 53.83 & 41.23 & 67.45 & 51.29 \\
        Linformer & 35.70 & 53.94 & 52.27 & 38.56 & 76.34 & 51.36 \\
        BigBird & 36.05 & 64.02 & 59.29 & 40.83 & 74.87 & 55.01 \\
        Linear Trans. & 16.13 & 65.90 & 53.09 & 43.40 & 75.30 & 50.76 \\
        Performer & 18.01 & 65.40 & 53.82 & 42.77 & \textbf{77.05}  &  51.41 \\
        % FNet & 35.33 & 65.11 & 59.61 & 38.67 & 77.80 & 54.42 \\
        Nyströmformer & 41.28 & 58.38 & 65.40 & 37.54 & 71.76 & 54.87 \\
        H-Transformer-1D & \textbf{49.53} & \textbf{78.69} & 63.99 & 46.05 & 68.78 & 61.41 \\
         % Nyströmformer & 37.15 & 65.52 & 79.56 & 41.58 & 70.94 & $\times$ & 57.46 \\
        % Luna-256 & 37.25 & 64.57 & 79.29 & 47.38 & 78.37 & $\times$ & 59.37 \\
        Top-$k$ Attention & 38.12 & 63.72 & 59.14 & $\times$ & $\times$ & 53.66 \\
        IceFormer & 41.53 & 60.01 & 66.02 & 40.46 & 74.42 & 56.49 \\ 
        cosFormer & 37.90 & 63.41 & 61.36 & 43.17 & 70.33 & 55.23 \\
        Skyformer & 39.25 & 64.70 & 82.06 & 40.77 & 70.73 & 59.50 \\
        Hedgehog & 37.15 & 64.60 &\textbf{82.24} & 40.15 & 74.16 & 59.66 \\
        % {\color{black} FlashAttention} & {\color{black}37.60} & {\color{black}63.90} & {\color{black}81.4} & {\color{black}43.5} & {\color{black}72.7} & {\color{black}59.8} \\
        \midrule
        \methodname & {42.52} & 64.52 & 77.92 & \textbf{64.39} & 68.20 &\textbf{63.51} \\
        % S4 & 59.60 & 86.82 & 90.90 & 88.65 & 94.20 & 96.35 & 86.09 \\
        \bottomrule
    \end{tabular}}
    \label{tab:lra_results}
    % \vspace{-0.5cm}
\end{table}

\paragraph{Autoregressive Language Modeling.} Furthermore, we evaluate several models on the \textsc{WikiText-103}~\citep{merity2016pointer}, a widely used benchmark for language modeling containing over 100 million tokens extracted from high-quality Wikipedia articles characterized by a large vocabulary and long-range dependencies. This makes it a challenging benchmark for testing a model's ability to predict the next token in a sequence. We use perplexity (PPL) as the primary evaluation metric, where lower scores indicate a better ability to capture the sequential structure within natural language text. \autoref{tab:perplexity_results} shows a vanilla Transformer\footnote{We use an auto-regressive Transformer based on \citet{biderman2023pythia} for comparision.} to achieve a test perplexity of 26.2. 
However, linear approximation models such as the Linear Transformer \citep{devil_linear_attn} struggle to compete, with higher perplexity values of 30.2 on the test set. 
% This highlights the limitations of linear models in effectively modeling long-range dependencies in large datasets like WikiText-103.
The table further compares several other models, including efficient attention mechanisms like Performer~\citep{choromanski2020rethinking}, Reformer~\citep{kitaev2020reformer}, and CosFormer~\citep{qincosformer}. Notably, CosFormer achieves the lowest perplexity on the test set with a score of 23.1, outperforming all other models. Reformer also shows competitive results, achieving a perplexity of 25.6, improving on the Vanilla Transformer. 
% In contrast, models like AFT-conv and Linear Transformer exhibit higher perplexity scores of 28.2 and 30.2, respectively, indicating weaker performance. 
\methodname~achieves a perplexity of 26.3, comparable to the Vanilla Transformer.

The results highlight the trade-offs between using conventional transformers, linear transformers, and models adopting approximate attention mechanisms like \methodname. It reinforces the importance of balancing computational efficiency with model performance, particularly in the context of long-sequence language modeling tasks, especially as the information necessary to solve a task becomes sparsely located within long contexts.

\subsection{Effect of Varying $d_K$ on Associative Recall Task}\label{sec:dimdk}

We next evaluate the effect of different key-query dimensions $d_K$ on the Transformer model’s performance for MQAR. The model dimensions are varied between $\{32, 64, 128, 256\}$ while adjusting $d_K$ to values $\{1, 2, 3, 8, 32\}$. As shown in \autoref{fig:dk_associative_recall}, the performance remains near-perfect even with low $d_K$ values, such as $d_K = 2$. The model achieves close to $100\%$ accuracy across all model dimensions, except for the smallest dimension ($d_K = 1$), where performance slightly drops for lower model dimensions. This demonstrates that the Transformer is capable of handling long-range dependencies in the Associative Recall task, even with relatively low key-query dimensionality.

These results suggest that reducing $d_K$ does not significantly impair the model's ability to recall information in sequence tasks. In fact, maintaining a low $d_K$ can provide computational savings without sacrificing performance, especially when model dimensions are large. This indicates that while random projections---such as those used in the Johnson-Lindenstrauss Lemma---approximately preserve distances, trainable projection networks $f_k$ and $f_q$ can better adapt to task-specific data and more effectively retain locality even with a low $d_K$. For instance, by setting $d_K$ as low as 3, we reduce it from the typical head dimension (normally 32, i.e. feature dimension as 512 with 8 heads). We can further mitigate information loss by configuring $f_k$ and $f_q$ as two-layer neural networks rather than single-layer ones.

\subsection{Performance of Euclidean-Based Softmax Operators}
\label{sec:euclidean}

We further evaluate the performance of transformers with various Euclidean-based Softmax operators on MQAR. Specifically, we compare Negative Euclidean, Cauchy Softmax (our proposed method), and Inverse Euclidean operators. The goal of this experiment is to test how these different formulations of Softmax handle varying key-query dimensions $d_K$ in terms of accuracy.

As shown in \autoref{fig:euclidean_softmax}, the proposed Cauchy Softmax consistently outperforms the other operators across all values of $d_K$. It achieves near-perfect accuracy for $d_K \geq 2$, whereas Negative Euclidean shows a drop in accuracy for lower $d_K$ values. Inverse Euclidean, while performing comparably at higher dimensions, struggles significantly at lower values of $d_K$ (e.g., $d_K = 1$).

These results highlight the advantage of using the Cauchy distribution for a smaller $d_K$, as it allows for better handling of long-range dependencies and achieves more stable performance across various key-query dimensions. The heavier tails of the Cauchy distribution enable distant tokens to retain non-negligible influence, which is crucial for tasks like Associative Recall where long-range token relationships are important.

\subsection{Locality Preservation after $Z$-order Curve Projection}
\label{sec:locality}
\begin{wrapfigure}{r}{0.4\textwidth} % 'r' for right, 'l' for left
    \centering
    % \vspace{-0.6cm}
    \resizebox{\linewidth}{!}{
        \includegraphics[width=\linewidth]{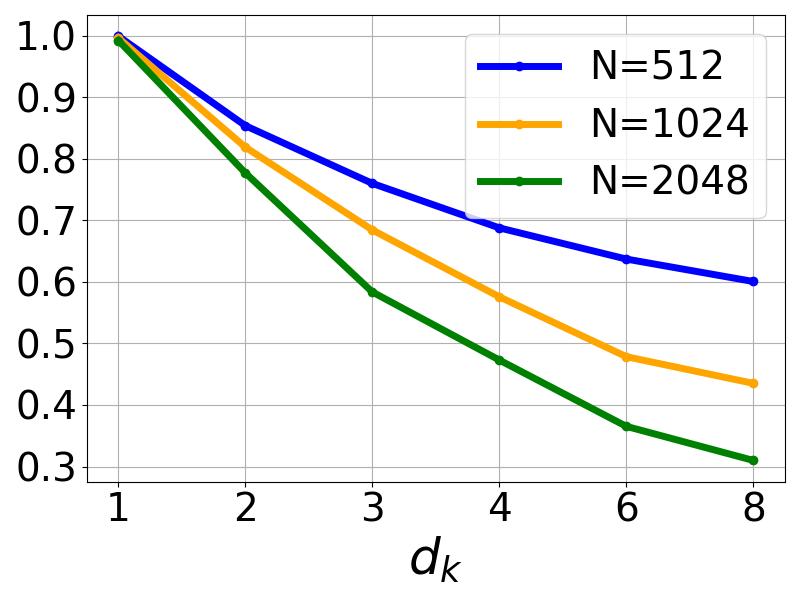} % Replace with your image path
    }
      \vspace{-0.5cm}
    \caption{The effect of dimensionality reduction before and after $Z$-order curves projection on locality preservation for different sample sizes.}
    % . The locality is measured by the overlap between the top-64 nearest neighbors before and after $Z$-order curve projection.}
    
    \label{fig:locality_preservation}
    % \vspace{-0.5cm}
\end{wrapfigure}

Next, we evaluate how well $Z$-order curve projections preserve locality across different dimensions and sample sizes. Specifically, we test the locality preservation by measuring the overlap between the top-64 nearest neighbors before and after projection, with sample sizes $N \in \{512, 1024, 2048\}$.

\autoref{fig:locality_preservation} shows the relationship between locality preservation and the dimensionality $d_K$. As $d_K$ increases, the overlap between the top-64 nearest neighbors diminishes for all sample sizes, indicating a decrease in locality preservation. Lower $d_K$ values exhibit a higher level of locality preservation across all sample sizes. However, for larger sample sizes, such as $N = 2048$, the drop in locality preservation is more pronounced as the dimensionality increases.  We select $d_K=3$ for \methodname.
% , and, with the multi-head mechanism, the probability of a token not being attended to by \methodname is very low (e.g., $(1-58.3\%)^8$ for 8 heads, $N=2048$).

These results highlight the importance of choosing an appropriate $d_K$ for maintaining locality, especially for larger datasets where higher dimensionality can lead to distortions in spatial proximity after a projection. 

\subsection{The Effects of $k$ in \methodname}
\label{sec:top-$k$}
As an ablation study, we explore the effect of varying $k$ in the attention mechanism of \methodname. The goal of this experiment is to analyze how different values of $k$ influence the model's performance on the associative recall task across different model dimensions.

\autoref{fig:top_k_ablation} shows \methodname~to achieve near-perfect accuracy across all model dimensions (32, 64, 128, and 256) for different values of $k$ ranging from 16 to 48. In most of our experiments, we set $k = 32$, as it provides a good balance between performance and computational efficiency. Interestingly, there is little variation in accuracy between different values of $k$, indicating that \methodname~is robust to changes in this parameter.

% The choice of $k$ presents a trade-off between maintaining prediction performance and the increased computational resources required. Overall, setting $k$ to 32 offers an optimal trade-off between accuracy and efficiency, and is thus used in the majority of our experiments. 
{
\color{black}

\subsection{Efficiency Benchmarking}\label{sec:efficiency}

In order to better understand the effectiveness of \methodname, we further conduct an experimental study to demonstrate its computational efficiency in comparison to existing attention methods. In particular, we compare with a naive attention implementation from \texttt{PyTorch} (based on \citet{vaswani2017attention}) as well as an IO-aware Flash-Attention ~\citep{dao2022flashattention, dao2023flashattention2}. Our implementation is based on Triton.
% and we conduct this benchmarking on a single NVIDIA A100 80GB GPU.

\begin{table}[h!]
    \centering
    \caption{Time (in milliseconds) for different operations to compute for a fixed-sized batch of varying sequence length. Our method outperforms a naive attention implementation across all lengths while also outperforming Flash-Attention by a signficant margin as the sequence length increases.}
    \resizebox{0.8\linewidth}{!}{
    \begin{tabular}{c|rr|rr|rr|rr}
    \toprule
    Method & \multicolumn{2}{c}{Torch Attention} & \multicolumn{2}{c}{Mamba} & \multicolumn{2}{c}{Flash Attention} & \multicolumn{2}{c}{ZETA} \\
    \midrule
    Input Length & \texttt{FWD} & \texttt{FWD+BWD} & \texttt{FWD} & \texttt{FWD+BWD}& \texttt{FWD} & \texttt{FWD+BWD} &\texttt{FWD} & \texttt{FWD+BWD}\\
    \midrule
4096 & 44.3 & 117.9 & 7.1 & 14.0 & 3.4 & 29.2 & 5.6 & 38.2 \\
8192 & OOM & OOM & 11.8 & 23.0 & 12.8 & 111.5 & 11.0 & 76.4 \\
16384 & OOM & OOM & 23.5 & 45.7 & 50.4 & 437.7 & 21.7 & 152.6 \\
32768 & OOM & OOM & 47.3 & 91.8 & 198.2 & 1733.5 & 43.0 & 304.8 \\
65536 & OOM & OOM & 94.0 & 183.7 & 805.3 & 7044.1 & 85.8 & 608.2 \\
\bottomrule
    \end{tabular}
    }
    \label{tab:time}
\end{table}
\vspace{0.25cm}
\begin{table}[h!]
    \centering
    \caption{Memory consumption (in MB) for different operations to compute for a fixed-sized batch of varying sequence length. Our method outperforms a naive attention implementation across all lengths while only marginally trailing a highly optimized Flash Attention implementation.}
    \resizebox{0.9\linewidth}{!}{
    \begin{tabular}{c|rr|rr|rr|rr}
    \toprule
    Method & \multicolumn{2}{c}{Torch Attention} & \multicolumn{2}{c}{Mamba} & \multicolumn{2}{c}{Flash Attention} & \multicolumn{2}{c}{ZETA} \\
    \midrule
    Input Length & \texttt{FWD} & \texttt{FWD+BWD} & \texttt{FWD} & \texttt{FWD+BWD}& \texttt{FWD} & \texttt{FWD+BWD} &\texttt{FWD} & \texttt{FWD+BWD}\\
    \midrule
4096 & 17268.1 & 25972.1 & 574.2& 632.2 & 886.1& 1784.1& 1314.1& 1926.1 \\
8192 & OOM  & OOM & 904.2 & 1020.2& 1528.1& 3324.1& 2382.1& 3606.1 \\
16384 & OOM  & OOM & 1564.2 & 1776.2 & 2812.1& 6404.1& 4520.1& 6968.2 \\
32768 & OOM  & OOM  & 2884.2 & 3200.2& 5380.1& 12564.1& 8796.2& 13692.2 \\
65536 & OOM  & OOM & 5524.2& 6048.2& 10516.1& 24884.1& 17348.2& 27140.3 \\
\bottomrule
    \end{tabular}
    }
    \label{tab:memory}
\end{table}
\vspace{1cm}

\autoref{tab:time} indicates the time required for both a forward pass as well as a forward-backward pass using our efficient \methodname~implementation as well as the aforementioned attention implementations. We observe that our implementation significantly outspeeds a naive implementation of attention and do not suffer from out of memory issues while also outperforming Flash-Attention for long sequences, with a widening gap as the sequence length increases. This indicates both the computational efficiency of our method as well as serves as an empirical validation of the $\mathcal{O}(N\log N)$ complexity of \methodname~which we previously justify theoretically. Furthermore, if we compare with Mamba~\citep{gu2024mamba}, we demonstrate that \methodname~has a faster forward pass while the forward-backward pass maintains a similar relative performance as the sequence length increases.

\autoref{tab:memory} meanwhile shows that \methodname~uses less memory than a naive attention implementation while also only slightly utilizing more memory than a highly optimized Flash Attention implementation. Nevertheless, in comparison to a sequence model such as Mamba, all attention models predictably use more memory due to the use of softmax-type operations.
}

\section{Conclusion}
In this paper, we presented \methodname, a model designed to enhance the efficiency of top-$k$ attention by leveraging $Z$-order curves for parallel token selection in one-dimensional space, reducing both time and space complexity to $\mathcal{O}(N \log N)$. By carefully selecting the dimensionality of key and query pairs, \methodname~effectively preserves relative distances, improving both locality and computational efficiency. Our comprehensive experiments on synthetic associative recall, \textsc{LRA}, and \textsc{WikiText-103} demonstrate that \methodname~consistently matches or outperforms traditional attention mechanisms, making it particularly well-suited for long-sequence tasks that demand scalability and efficiency. Additionally, the introduction of the Adaptive Cauchy-Softmax mechanism enhances \methodname's flexibility, enabling it to handle long-range dependencies more effectively and efficiently. Overall, $\methodname$ offers a robust, scalable, and efficient solution for sequence modeling, combining adaptive token selection with dynamic softmax to optimize performance across a range of tasks and datasets.

% In this paper, we presented \methodname, a model that enhances the efficiency of top-$k$ attention by leveraging $Z$-order curves for parallel token selection in one-dimensional space, reducing the time and space complexity to $\mathcal{O}(N \log N)$. By selecting the dimensionality of key and query pairs, \methodname~preserves relative distances while improving computational efficiency. Our experiments on Associative Recall, Long-Range Arena (LRA), and WikiText-103 demonstrated that \methodname~consistently matches or outperforms standard attention mechanisms, making it well-suited for long-sequence tasks. Additionally, the introduction of Adaptive Cauchy-Softmax further enhances \methodname's flexibility, enabling efficient handling of long-range dependencies. Overall, \methodname~offers a scalable and effective solution for efficient sequence modeling.
\newpage
\section*{Acknowledgements}
We appreciate constructive feedback from anonymous reviewers and meta-reviewers. This work is supported by the Natural Sciences and Engineering Research Council of Canada (NSERC), Discovery Grants program.
\bibliography{iclr2025_conference}

\begin{thebibliography}{55}
\providecommand{\natexlab}[1]{#1}
\providecommand{\url}[1]{\texttt{#1}}
\expandafter\ifx\csname urlstyle\endcsname\relax
  \providecommand{\doi}[1]{doi: #1}\else
  \providecommand{\doi}{doi: \begingroup \urlstyle{rm}\Url}\fi

\bibitem[Abdi \& Williams(2010)Abdi and Williams]{abdi2010principal}
Herv{\'e} Abdi and Lynne~J Williams.
\newblock Principal component analysis.
\newblock \emph{Wiley interdisciplinary reviews: computational statistics}, 2\penalty0 (4):\penalty0 433--459, 2010.

\bibitem[Arora et~al.(2024{\natexlab{a}})Arora, Eyuboglu, Timalsina, Johnson, Poli, Zou, Rudra, and Re]{arora2024zoology}
Simran Arora, Sabri Eyuboglu, Aman Timalsina, Isys Johnson, Michael Poli, James Zou, Atri Rudra, and Christopher Re.
\newblock Zoology: Measuring and improving recall in efficient language models.
\newblock In \emph{The Twelfth International Conference on Learning Representations}, 2024{\natexlab{a}}.
\newblock URL \url{https://openreview.net/forum?id=LY3ukUANko}.

\bibitem[Arora et~al.(2024{\natexlab{b}})Arora, Eyuboglu, Zhang, Timalsina, Alberti, Zou, Rudra, and Re]{arorasimple}
Simran Arora, Sabri Eyuboglu, Michael Zhang, Aman Timalsina, Silas Alberti, James Zou, Atri Rudra, and Christopher Re.
\newblock Simple linear attention language models balance the recall-throughput tradeoff.
\newblock In \emph{Forty-first International Conference on Machine Learning}, 2024{\natexlab{b}}.

\bibitem[Bahdanau et~al.(2015)Bahdanau, Cho, and Bengio]{bahdanau2016nmt}
Dzmitry Bahdanau, Kyunghyun Cho, and Yoshua Bengio.
\newblock Neural machine translation by jointly learning to align and translate.
\newblock In \emph{International Conference on Learning Representations}, 2015.

\bibitem[Beltagy et~al.(2020)Beltagy, Peters, and Cohan]{beltagy2020longformer}
Iz~Beltagy, Matthew~E Peters, and Arman Cohan.
\newblock Longformer: The long-document transformer.
\newblock \emph{arXiv preprint arXiv:2004.05150}, 2020.

\bibitem[Bertsch et~al.(2023)Bertsch, Alon, Neubig, and Gormley]{bertsch2023unlimiformer}
Amanda Bertsch, Uri Alon, Graham Neubig, and Matthew~R. Gormley.
\newblock Unlimiformer: Long-range transformers with unlimited length input.
\newblock In \emph{Thirty-seventh Conference on Neural Information Processing Systems}, 2023.
\newblock URL \url{https://openreview.net/forum?id=lJWUJWLCJo}.

\bibitem[Biderman et~al.(2023)Biderman, Schoelkopf, Anthony, Bradley, O’Brien, Hallahan, Khan, Purohit, Prashanth, Raff, et~al.]{biderman2023pythia}
Stella Biderman, Hailey Schoelkopf, Quentin~Gregory Anthony, Herbie Bradley, Kyle O’Brien, Eric Hallahan, Mohammad~Aflah Khan, Shivanshu Purohit, USVSN~Sai Prashanth, Edward Raff, et~al.
\newblock Pythia: A suite for analyzing large language models across training and scaling.
\newblock In \emph{International Conference on Machine Learning}, pp.\  2397--2430. PMLR, 2023.

\bibitem[Billingsley(1986)]{Bill86}
Patrick Billingsley.
\newblock \emph{Probability and Measure}.
\newblock John Wiley and Sons, second edition, 1986.

\bibitem[Brooks et~al.(2024)Brooks, Peebles, Holmes, DePue, Guo, Jing, Schnurr, Taylor, Luhman, Luhman, Ng, Wang, and Ramesh]{sora}
Tim Brooks, Bill Peebles, Connor Holmes, Will DePue, Yufei Guo, Li~Jing, David Schnurr, Joe Taylor, Troy Luhman, Eric Luhman, Clarence Ng, Ricky Wang, and Aditya Ramesh.
\newblock Video generation models as world simulators, 2024.
\newblock URL \url{https://openai.com/research/video-generation-models-as-world-simulators}.

\bibitem[Brown et~al.(2020)Brown, Mann, Ryder, Subbiah, Kaplan, Dhariwal, Neelakantan, Shyam, Sastry, Askell, Agarwal, Herbert-Voss, Krueger, Henighan, Child, Ramesh, Ziegler, Wu, Winter, Hesse, Chen, Sigler, Litwin, Gray, Chess, Clark, Berner, McCandlish, Radford, Sutskever, and Amodei]{brown2020language}
Tom~B. Brown, Benjamin Mann, Nick Ryder, Melanie Subbiah, Jared Kaplan, Prafulla Dhariwal, Arvind Neelakantan, Pranav Shyam, Girish Sastry, Amanda Askell, Sandhini Agarwal, Ariel Herbert-Voss, Gretchen Krueger, Tom Henighan, Rewon Child, Aditya Ramesh, Daniel~M. Ziegler, Jeffrey Wu, Clemens Winter, Christopher Hesse, Mark Chen, Eric Sigler, Mateusz Litwin, Scott Gray, Benjamin Chess, Jack Clark, Christopher Berner, Sam McCandlish, Alec Radford, Ilya Sutskever, and Dario Amodei.
\newblock Language models are few-shot learners.
\newblock In \emph{Proceedings of the 34th International Conference on Neural Information Processing Systems}, NIPS '20, Red Hook, NY, USA, 2020. Curran Associates Inc.
\newblock ISBN 9781713829546.

\bibitem[Chen et~al.(2021)Chen, Zeng, Ji, and Yang]{Skyformer}
Yifan Chen, Qi~Zeng, Heng Ji, and Yun Yang.
\newblock Skyformer: Remodel self-attention with gaussian kernel and nystr{\"o}m method.
\newblock In A.~Beygelzimer, Y.~Dauphin, P.~Liang, and J.~Wortman Vaughan (eds.), \emph{Advances in Neural Information Processing Systems}, 2021.
\newblock URL \url{https://openreview.net/forum?id=pZCYG7gjkKz}.

\bibitem[Child et~al.(2019)Child, Gray, Radford, and Sutskever]{child2019generating}
Rewon Child, Scott Gray, Alec Radford, and Ilya Sutskever.
\newblock Generating long sequences with sparse transformers.
\newblock \emph{arXiv preprint arXiv:1904.10509}, 2019.

\bibitem[Choromanski et~al.(2021)Choromanski, Likhosherstov, Dohan, Song, Gane, Sarlos, Hawkins, Davis, Mohiuddin, Kaiser, Belanger, Colwell, and Weller]{choromanski2020rethinking}
Krzysztof~Marcin Choromanski, Valerii Likhosherstov, David Dohan, Xingyou Song, Andreea Gane, Tamas Sarlos, Peter Hawkins, Jared~Quincy Davis, Afroz Mohiuddin, Lukasz Kaiser, David~Benjamin Belanger, Lucy~J Colwell, and Adrian Weller.
\newblock Rethinking attention with performers.
\newblock In \emph{International Conference on Learning Representations}, 2021.
\newblock URL \url{https://openreview.net/forum?id=Ua6zuk0WRH}.

\bibitem[Cover \& Thomas(2006)Cover and Thomas]{thomas2006elements}
Thomas~M. Cover and Joy~A. Thomas.
\newblock \emph{Elements of information theory}.
\newblock John Wiley \& Sons, 2006.

\bibitem[Dao(2024)]{dao2023flashattention2}
Tri Dao.
\newblock Flash{A}ttention-2: Faster attention with better parallelism and work partitioning.
\newblock In \emph{International Conference on Learning Representations (ICLR)}, 2024.

\bibitem[Dao et~al.(2022)Dao, Fu, Ermon, Rudra, and R{\'e}]{dao2022flashattention}
Tri Dao, Daniel~Y. Fu, Stefano Ermon, Atri Rudra, and Christopher R{\'e}.
\newblock Flash{A}ttention: Fast and memory-efficient exact attention with {IO}-awareness.
\newblock In \emph{Advances in Neural Information Processing Systems (NeurIPS)}, 2022.

\bibitem[Devlin et~al.(2019)Devlin, Chang, Lee, and Toutanova]{devlin2019bert}
Jacob Devlin, Ming-Wei Chang, Kenton Lee, and Kristina Toutanova.
\newblock {BERT}: Pre-training of deep bidirectional transformers for language understanding.
\newblock In Jill Burstein, Christy Doran, and Thamar Solorio (eds.), \emph{Proceedings of the 2019 Conference of the North {A}merican Chapter of the Association for Computational Linguistics: Human Language Technologies, Volume 1 (Long and Short Papers)}, pp.\  4171--4186, Minneapolis, Minnesota, June 2019. Association for Computational Linguistics.
\newblock \doi{10.18653/v1/N19-1423}.
\newblock URL \url{https://aclanthology.org/N19-1423}.

\bibitem[Dosovitskiy et~al.(2021)Dosovitskiy, Beyer, Kolesnikov, Weissenborn, Zhai, Unterthiner, Dehghani, Minderer, Heigold, Gelly, Uszkoreit, and Houlsby]{dosovitskiy2020image}
Alexey Dosovitskiy, Lucas Beyer, Alexander Kolesnikov, Dirk Weissenborn, Xiaohua Zhai, Thomas Unterthiner, Mostafa Dehghani, Matthias Minderer, Georg Heigold, Sylvain Gelly, Jakob Uszkoreit, and Neil Houlsby.
\newblock An image is worth 16x16 words: Transformers for image recognition at scale.
\newblock In \emph{International Conference on Learning Representations}, 2021.
\newblock URL \url{https://openreview.net/forum?id=YicbFdNTTy}.

\bibitem[Dugundji(1966)]{dugundji1989topology}
James Dugundji.
\newblock \emph{Topology}.
\newblock Allyn and Bacon, 1966.

\bibitem[Fang et~al.(2022)Fang, Wen, Kang, and Liu]{fang2022structure}
Ruiyi Fang, Liangjian Wen, Zhao Kang, and Jianzhuang Liu.
\newblock Structure-preserving graph representation learning.
\newblock In \emph{2022 IEEE International Conference on Data Mining (ICDM)}, pp.\  927--932. IEEE, 2022.

\bibitem[Fang et~al.(2025)Fang, Li, Kang, Zeng, Pu, Dashtbayaz, Wang, and Ling]{fang2025benefits}
Ruiyi Fang, Bingheng Li, Zhao Kang, Qiuhao Zeng, Ruizhi Pu, Nima~Hosseini Dashtbayaz, Boyu Wang, and Charles Ling.
\newblock On the benefits of attribute-driven graph domain adaptation.
\newblock \emph{The Thirteenth International Conference on Learning Representations}, 2025.

\bibitem[Gu \& Dao(2024)Gu and Dao]{gu2024mamba}
Albert Gu and Tri Dao.
\newblock Mamba: Linear-time sequence modeling with selective state spaces.
\newblock In \emph{First Conference on Language Modeling}, 2024.
\newblock URL \url{https://openreview.net/forum?id=tEYskw1VY2}.

\bibitem[Gupta et~al.(2021)Gupta, Dar, Goodman, Ciprut, and Berant]{gupta2021memory}
Ankit Gupta, Guy Dar, Shaya Goodman, David Ciprut, and Jonathan Berant.
\newblock Memory-efficient transformers via top-k attention.
\newblock In Nafise~Sadat Moosavi, Iryna Gurevych, Angela Fan, Thomas Wolf, Yufang Hou, Ana Marasovi{\'c}, and Sujith Ravi (eds.), \emph{Proceedings of the Second Workshop on Simple and Efficient Natural Language Processing}, pp.\  39--52, Virtual, November 2021. Association for Computational Linguistics.
\newblock \doi{10.18653/v1/2021.sustainlp-1.5}.
\newblock URL \url{https://aclanthology.org/2021.sustainlp-1.5}.

\bibitem[Ho et~al.(2020)Ho, Kalchbrenner, Weissenborn, and Salimans]{ho2020axial}
Jonathan Ho, Nal Kalchbrenner, Dirk Weissenborn, and Tim Salimans.
\newblock Axial attention in multidimensional transformers, 2020.
\newblock URL \url{https://openreview.net/forum?id=H1e5GJBtDr}.

\bibitem[Jiang et~al.(2024)Jiang, Sablayrolles, Roux, Mensch, Savary, Bamford, Chaplot, de~Las~Casas, Hanna, Bressand, Lengyel, Bour, Lample, Lavaud, Saulnier, Lachaux, Stock, Subramanian, Yang, Antoniak, Scao, Gervet, Lavril, Wang, Lacroix, and Sayed]{jiang_mixtral_2024}
Albert~Q. Jiang, Alexandre Sablayrolles, Antoine Roux, Arthur Mensch, Blanche Savary, Chris Bamford, Devendra~Singh Chaplot, Diego de~Las~Casas, Emma~Bou Hanna, Florian Bressand, Gianna Lengyel, Guillaume Bour, Guillaume Lample, L{\'{e}}lio~Renard Lavaud, Lucile Saulnier, Marie{-}Anne Lachaux, Pierre Stock, Sandeep Subramanian, Sophia Yang, Szymon Antoniak, Teven~Le Scao, Th{\'{e}}ophile Gervet, Thibaut Lavril, Thomas Wang, Timoth{\'{e}}e Lacroix, and William~El Sayed.
\newblock Mixtral of experts.
\newblock \emph{CoRR}, abs/2401.04088, 2024.
\newblock \doi{10.48550/ARXIV.2401.04088}.
\newblock URL \url{https://doi.org/10.48550/arXiv.2401.04088}.

\bibitem[Johnson et~al.(1986)Johnson, Lindenstrauss, and Schechtman]{johnson1986extensions}
William~B Johnson, Joram Lindenstrauss, and Gideon Schechtman.
\newblock Extensions of lipschitz maps into banach spaces.
\newblock \emph{Israel Journal of Mathematics}, 54\penalty0 (2):\penalty0 129--138, 1986.

\bibitem[Jurafsky \& Martin(2024)Jurafsky and Martin]{jm3}
Daniel Jurafsky and James~H. Martin.
\newblock \emph{Speech and Language Processing: An Introduction to Natural Language Processing, Computational Linguistics, and Speech Recognition with Language Models}.
\newblock Pearson, 3rd edition, 2024.
\newblock URL \url{https://web.stanford.edu/~jurafsky/slp3/}.

\bibitem[Kitaev et~al.(2020)Kitaev, Kaiser, and Levskaya]{kitaev2020reformer}
Nikita Kitaev, Lukasz Kaiser, and Anselm Levskaya.
\newblock Reformer: The efficient transformer.
\newblock In \emph{International Conference on Learning Representations}, 2020.
\newblock URL \url{https://openreview.net/forum?id=rkgNKkHtvB}.

\bibitem[Mao et~al.(2024)Mao, Ester, and Li]{mao2024iceformer}
Yuzhen Mao, Martin Ester, and Ke~Li.
\newblock Iceformer: Accelerated inference with long-sequence transformers on {CPU}s.
\newblock In \emph{The Twelfth International Conference on Learning Representations}, 2024.
\newblock URL \url{https://openreview.net/forum?id=6RR3wU4mSZ}.

\bibitem[McInnes et~al.(2018)McInnes, Healy, Saul, and Grossberger]{mcinnes2018umap}
Leland McInnes, John Healy, Nathaniel Saul, and Lukas Grossberger.
\newblock Umap: Uniform manifold approximation and projection.
\newblock \emph{The Journal of Open Source Software}, 3\penalty0 (29):\penalty0 861, 2018.

\bibitem[Merity et~al.(2017)Merity, Xiong, Bradbury, and Socher]{merity2016pointer}
Stephen Merity, Caiming Xiong, James Bradbury, and Richard Socher.
\newblock Pointer sentinel mixture models.
\newblock In \emph{International Conference on Learning Representations}, 2017.
\newblock URL \url{https://openreview.net/forum?id=Byj72udxe}.

\bibitem[OpenAI et~al.(2024)OpenAI, Achiam, Adler, Agarwal, Ahmad, Akkaya, Aleman, Almeida, Altenschmidt, Altman, Anadkat, Avila, Babuschkin, Balaji, Balcom, Baltescu, Bao, Bavarian, Belgum, Bello, Berdine, Bernadett-Shapiro, Berner, Bogdonoff, Boiko, Boyd, Brakman, Brockman, Brooks, Brundage, Button, Cai, Campbell, Cann, Carey, Carlson, Carmichael, Chan, Chang, Chantzis, Chen, Chen, Chen, Chen, Chen, Chess, Cho, Chu, Chung, Cummings, Currier, Dai, Decareaux, Degry, Deutsch, Deville, Dhar, Dohan, Dowling, Dunning, Ecoffet, Eleti, Eloundou, Farhi, Fedus, Felix, Fishman, Forte, Fulford, Gao, Georges, Gibson, Goel, Gogineni, Goh, Gontijo-Lopes, Gordon, Grafstein, Gray, Greene, Gross, Gu, Guo, Hallacy, Han, Harris, He, Heaton, Heidecke, Hesse, Hickey, Hickey, Hoeschele, Houghton, Hsu, Hu, Hu, Huizinga, Jain, Jain, Jang, Jiang, Jiang, Jin, Jin, Jomoto, Jonn, Jun, Kaftan, Łukasz Kaiser, Kamali, Kanitscheider, Keskar, Khan, Kilpatrick, Kim, Kim, Kim, Kirchner, Kiros, Knight, Kokotajlo, Łukasz Kondraciuk, Kondrich,
  Konstantinidis, Kosic, Krueger, Kuo, Lampe, Lan, Lee, Leike, Leung, Levy, Li, Lim, Lin, Lin, Litwin, Lopez, Lowe, Lue, Makanju, Malfacini, Manning, Markov, Markovski, Martin, Mayer, Mayne, McGrew, McKinney, McLeavey, McMillan, McNeil, Medina, Mehta, Menick, Metz, Mishchenko, Mishkin, Monaco, Morikawa, Mossing, Mu, Murati, Murk, Mély, Nair, Nakano, Nayak, Neelakantan, Ngo, Noh, Ouyang, O'Keefe, Pachocki, Paino, Palermo, Pantuliano, Parascandolo, Parish, Parparita, Passos, Pavlov, Peng, Perelman, de~Avila Belbute~Peres, Petrov, de~Oliveira~Pinto, Michael, Pokorny, Pokrass, Pong, Powell, Power, Power, Proehl, Puri, Radford, Rae, Ramesh, Raymond, Real, Rimbach, Ross, Rotsted, Roussez, Ryder, Saltarelli, Sanders, Santurkar, Sastry, Schmidt, Schnurr, Schulman, Selsam, Sheppard, Sherbakov, Shieh, Shoker, Shyam, Sidor, Sigler, Simens, Sitkin, Slama, Sohl, Sokolowsky, Song, Staudacher, Such, Summers, Sutskever, Tang, Tezak, Thompson, Tillet, Tootoonchian, Tseng, Tuggle, Turley, Tworek, Uribe, Vallone, Vijayvergiya,
  Voss, Wainwright, Wang, Wang, Wang, Ward, Wei, Weinmann, Welihinda, Welinder, Weng, Weng, Wiethoff, Willner, Winter, Wolrich, Wong, Workman, Wu, Wu, Wu, Xiao, Xu, Yoo, Yu, Yuan, Zaremba, Zellers, Zhang, Zhang, Zhao, Zheng, Zhuang, Zhuk, and Zoph]{openai_gpt-4_2023}
OpenAI, Josh Achiam, Steven Adler, Sandhini Agarwal, Lama Ahmad, Ilge Akkaya, Florencia~Leoni Aleman, Diogo Almeida, Janko Altenschmidt, Sam Altman, Shyamal Anadkat, Red Avila, Igor Babuschkin, Suchir Balaji, Valerie Balcom, Paul Baltescu, Haiming Bao, Mohammad Bavarian, Jeff Belgum, Irwan Bello, Jake Berdine, Gabriel Bernadett-Shapiro, Christopher Berner, Lenny Bogdonoff, Oleg Boiko, Madelaine Boyd, Anna-Luisa Brakman, Greg Brockman, Tim Brooks, Miles Brundage, Kevin Button, Trevor Cai, Rosie Campbell, Andrew Cann, Brittany Carey, Chelsea Carlson, Rory Carmichael, Brooke Chan, Che Chang, Fotis Chantzis, Derek Chen, Sully Chen, Ruby Chen, Jason Chen, Mark Chen, Ben Chess, Chester Cho, Casey Chu, Hyung~Won Chung, Dave Cummings, Jeremiah Currier, Yunxing Dai, Cory Decareaux, Thomas Degry, Noah Deutsch, Damien Deville, Arka Dhar, David Dohan, Steve Dowling, Sheila Dunning, Adrien Ecoffet, Atty Eleti, Tyna Eloundou, David Farhi, Liam Fedus, Niko Felix, Simón~Posada Fishman, Juston Forte, Isabella Fulford, Leo
  Gao, Elie Georges, Christian Gibson, Vik Goel, Tarun Gogineni, Gabriel Goh, Rapha Gontijo-Lopes, Jonathan Gordon, Morgan Grafstein, Scott Gray, Ryan Greene, Joshua Gross, Shixiang~Shane Gu, Yufei Guo, Chris Hallacy, Jesse Han, Jeff Harris, Yuchen He, Mike Heaton, Johannes Heidecke, Chris Hesse, Alan Hickey, Wade Hickey, Peter Hoeschele, Brandon Houghton, Kenny Hsu, Shengli Hu, Xin Hu, Joost Huizinga, Shantanu Jain, Shawn Jain, Joanne Jang, Angela Jiang, Roger Jiang, Haozhun Jin, Denny Jin, Shino Jomoto, Billie Jonn, Heewoo Jun, Tomer Kaftan, Łukasz Kaiser, Ali Kamali, Ingmar Kanitscheider, Nitish~Shirish Keskar, Tabarak Khan, Logan Kilpatrick, Jong~Wook Kim, Christina Kim, Yongjik Kim, Jan~Hendrik Kirchner, Jamie Kiros, Matt Knight, Daniel Kokotajlo, Łukasz Kondraciuk, Andrew Kondrich, Aris Konstantinidis, Kyle Kosic, Gretchen Krueger, Vishal Kuo, Michael Lampe, Ikai Lan, Teddy Lee, Jan Leike, Jade Leung, Daniel Levy, Chak~Ming Li, Rachel Lim, Molly Lin, Stephanie Lin, Mateusz Litwin, Theresa Lopez, Ryan
  Lowe, Patricia Lue, Anna Makanju, Kim Malfacini, Sam Manning, Todor Markov, Yaniv Markovski, Bianca Martin, Katie Mayer, Andrew Mayne, Bob McGrew, Scott~Mayer McKinney, Christine McLeavey, Paul McMillan, Jake McNeil, David Medina, Aalok Mehta, Jacob Menick, Luke Metz, Andrey Mishchenko, Pamela Mishkin, Vinnie Monaco, Evan Morikawa, Daniel Mossing, Tong Mu, Mira Murati, Oleg Murk, David Mély, Ashvin Nair, Reiichiro Nakano, Rajeev Nayak, Arvind Neelakantan, Richard Ngo, Hyeonwoo Noh, Long Ouyang, Cullen O'Keefe, Jakub Pachocki, Alex Paino, Joe Palermo, Ashley Pantuliano, Giambattista Parascandolo, Joel Parish, Emy Parparita, Alex Passos, Mikhail Pavlov, Andrew Peng, Adam Perelman, Filipe de~Avila Belbute~Peres, Michael Petrov, Henrique~Ponde de~Oliveira~Pinto, Michael, Pokorny, Michelle Pokrass, Vitchyr~H. Pong, Tolly Powell, Alethea Power, Boris Power, Elizabeth Proehl, Raul Puri, Alec Radford, Jack Rae, Aditya Ramesh, Cameron Raymond, Francis Real, Kendra Rimbach, Carl Ross, Bob Rotsted, Henri Roussez,
  Nick Ryder, Mario Saltarelli, Ted Sanders, Shibani Santurkar, Girish Sastry, Heather Schmidt, David Schnurr, John Schulman, Daniel Selsam, Kyla Sheppard, Toki Sherbakov, Jessica Shieh, Sarah Shoker, Pranav Shyam, Szymon Sidor, Eric Sigler, Maddie Simens, Jordan Sitkin, Katarina Slama, Ian Sohl, Benjamin Sokolowsky, Yang Song, Natalie Staudacher, Felipe~Petroski Such, Natalie Summers, Ilya Sutskever, Jie Tang, Nikolas Tezak, Madeleine~B. Thompson, Phil Tillet, Amin Tootoonchian, Elizabeth Tseng, Preston Tuggle, Nick Turley, Jerry Tworek, Juan Felipe~Cerón Uribe, Andrea Vallone, Arun Vijayvergiya, Chelsea Voss, Carroll Wainwright, Justin~Jay Wang, Alvin Wang, Ben Wang, Jonathan Ward, Jason Wei, CJ~Weinmann, Akila Welihinda, Peter Welinder, Jiayi Weng, Lilian Weng, Matt Wiethoff, Dave Willner, Clemens Winter, Samuel Wolrich, Hannah Wong, Lauren Workman, Sherwin Wu, Jeff Wu, Michael Wu, Kai Xiao, Tao Xu, Sarah Yoo, Kevin Yu, Qiming Yuan, Wojciech Zaremba, Rowan Zellers, Chong Zhang, Marvin Zhang, Shengjia
  Zhao, Tianhao Zheng, Juntang Zhuang, William Zhuk, and Barret Zoph.
\newblock Gpt-4 technical report, 2024.
\newblock URL \url{https://arxiv.org/abs/2303.08774}.

\bibitem[Ott et~al.(2018)Ott, Edunov, Grangier, and Auli]{ott2018scaling}
Myle Ott, Sergey Edunov, David Grangier, and Michael Auli.
\newblock Scaling neural machine translation.
\newblock In Ond{\v{r}}ej Bojar, Rajen Chatterjee, Christian Federmann, Mark Fishel, Yvette Graham, Barry Haddow, Matthias Huck, Antonio~Jimeno Yepes, Philipp Koehn, Christof Monz, Matteo Negri, Aur{\'e}lie N{\'e}v{\'e}ol, Mariana Neves, Matt Post, Lucia Specia, Marco Turchi, and Karin Verspoor (eds.), \emph{Proceedings of the Third Conference on Machine Translation: Research Papers}, pp.\  1--9, Brussels, Belgium, October 2018. Association for Computational Linguistics.
\newblock \doi{10.18653/v1/W18-6301}.
\newblock URL \url{https://aclanthology.org/W18-6301}.

\bibitem[Paszke et~al.(2019)Paszke, Gross, Massa, Lerer, Bradbury, Chanan, Killeen, Lin, Gimelshein, Antiga, Desmaison, K{\"{o}}pf, Yang, DeVito, Raison, Tejani, Chilamkurthy, Steiner, Fang, Bai, and Chintala]{pytorch}
Adam Paszke, Sam Gross, Francisco Massa, Adam Lerer, James Bradbury, Gregory Chanan, Trevor Killeen, Zeming Lin, Natalia Gimelshein, Luca Antiga, Alban Desmaison, Andreas K{\"{o}}pf, Edward~Z. Yang, Zachary DeVito, Martin Raison, Alykhan Tejani, Sasank Chilamkurthy, Benoit Steiner, Lu~Fang, Junjie Bai, and Soumith Chintala.
\newblock Pytorch: An imperative style, high-performance deep learning library.
\newblock In Hanna~M. Wallach, Hugo Larochelle, Alina Beygelzimer, Florence d'Alch{\'{e}}{-}Buc, Emily~B. Fox, and Roman Garnett (eds.), \emph{Advances in Neural Information Processing Systems 32: Annual Conference on Neural Information Processing Systems 2019, NeurIPS 2019, December 8-14, 2019, Vancouver, BC, Canada}, pp.\  8024--8035, 2019.
\newblock URL \url{https://proceedings.neurips.cc/paper/2019/hash/bdbca288fee7f92f2bfa9f7012727740-Abstract.html}.

\bibitem[Qin et~al.(2022{\natexlab{a}})Qin, Han, Sun, Li, Kong, Barnes, and Zhong]{devil_linear_attn}
Zhen Qin, Xiaodong Han, Weixuan Sun, Dongxu Li, Lingpeng Kong, Nick Barnes, and Yiran Zhong.
\newblock The devil in linear transformer.
\newblock In Yoav Goldberg, Zornitsa Kozareva, and Yue Zhang (eds.), \emph{Proceedings of the 2022 Conference on Empirical Methods in Natural Language Processing, {EMNLP} 2022, Abu Dhabi, United Arab Emirates, December 7-11, 2022}, pp.\  7025--7041. Association for Computational Linguistics, 2022{\natexlab{a}}.
\newblock \doi{10.18653/V1/2022.EMNLP-MAIN.473}.
\newblock URL \url{https://doi.org/10.18653/v1/2022.emnlp-main.473}.

\bibitem[Qin et~al.(2022{\natexlab{b}})Qin, Sun, Deng, Li, Wei, Lv, Yan, Kong, and Zhong]{qincosformer}
Zhen Qin, Weixuan Sun, Hui Deng, Dongxu Li, Yunshen Wei, Baohong Lv, Junjie Yan, Lingpeng Kong, and Yiran Zhong.
\newblock cosformer: Rethinking softmax in attention.
\newblock In \emph{International Conference on Learning Representations}, 2022{\natexlab{b}}.
\newblock URL \url{https://openreview.net/forum?id=Bl8CQrx2Up4}.

\bibitem[Radford et~al.(2019)Radford, Wu, Child, Luan, Amodei, Sutskever, et~al.]{radford2019language}
Alec Radford, Jeffrey Wu, Rewon Child, David Luan, Dario Amodei, Ilya Sutskever, et~al.
\newblock Language models are unsupervised multitask learners, 2019.

\bibitem[Ramesh et~al.(2021)Ramesh, Pavlov, Goh, Gray, Voss, Radford, Chen, and Sutskever]{DALL-E}
Aditya Ramesh, Mikhail Pavlov, Gabriel Goh, Scott Gray, Chelsea Voss, Alec Radford, Mark Chen, and Ilya Sutskever.
\newblock Zero-shot text-to-image generation.
\newblock In Marina Meila and Tong Zhang (eds.), \emph{Proceedings of the 38th International Conference on Machine Learning, {ICML} 2021, 18-24 July 2021, Virtual Event}, volume 139 of \emph{Proceedings of Machine Learning Research}, pp.\  8821--8831. {PMLR}, 2021.
\newblock URL \url{http://proceedings.mlr.press/v139/ramesh21a.html}.

\bibitem[Shalev-Shwartz \& Ben-David(2014)Shalev-Shwartz and Ben-David]{shalev2014understanding}
Shai Shalev-Shwartz and Shai Ben-David.
\newblock \emph{Understanding Machine Learning: From Theory to Algorithms}.
\newblock Cambridge University Press, 2014.

\bibitem[Shen et~al.(2023)Shen, Guo, Tan, Tang, Wang, and Bian]{relu_attention}
Kai Shen, Junliang Guo, Xu~Tan, Siliang Tang, Rui Wang, and Jiang Bian.
\newblock A study on relu and softmax in transformer.
\newblock \emph{CoRR}, abs/2302.06461, 2023.
\newblock \doi{10.48550/ARXIV.2302.06461}.
\newblock URL \url{https://doi.org/10.48550/arXiv.2302.06461}.

\bibitem[Tay et~al.(2020)Tay, Bahri, Yang, Metzler, and Juan]{tay2020sparse}
Yi~Tay, Dara Bahri, Liu Yang, Donald Metzler, and Da-Cheng Juan.
\newblock Sparse {S}inkhorn attention.
\newblock In Hal~Daumé III and Aarti Singh (eds.), \emph{Proceedings of the 37th International Conference on Machine Learning}, volume 119 of \emph{Proceedings of Machine Learning Research}, pp.\  9438--9447. PMLR, 13--18 Jul 2020.
\newblock URL \url{https://proceedings.mlr.press/v119/tay20a.html}.

\bibitem[Tay et~al.(2021)Tay, Dehghani, Abnar, Shen, Bahri, Pham, Rao, Yang, Ruder, and Metzler]{tay2020long}
Yi~Tay, Mostafa Dehghani, Samira Abnar, Yikang Shen, Dara Bahri, Philip Pham, Jinfeng Rao, Liu Yang, Sebastian Ruder, and Donald Metzler.
\newblock Long range arena : A benchmark for efficient transformers.
\newblock In \emph{International Conference on Learning Representations}, 2021.
\newblock URL \url{https://openreview.net/forum?id=qVyeW-grC2k}.

\bibitem[Tay et~al.(2022)Tay, Dehghani, Bahri, and Metzler]{survey}
Yi~Tay, Mostafa Dehghani, Dara Bahri, and Donald Metzler.
\newblock Efficient transformers: A survey.
\newblock \emph{ACM Comput. Surv.}, 55\penalty0 (6), December 2022.
\newblock ISSN 0360-0300.
\newblock \doi{10.1145/3530811}.
\newblock URL \url{https://doi.org/10.1145/3530811}.

\bibitem[Touvron et~al.(2021)Touvron, Cord, Douze, Massa, Sablayrolles, and Jegou]{touvron2021training}
Hugo Touvron, Matthieu Cord, Matthijs Douze, Francisco Massa, Alexandre Sablayrolles, and Herve Jegou.
\newblock Training data-efficient image transformers \& distillation through attention.
\newblock In Marina Meila and Tong Zhang (eds.), \emph{Proceedings of the 38th International Conference on Machine Learning}, volume 139 of \emph{Proceedings of Machine Learning Research}, pp.\  10347--10357. PMLR, 18--24 Jul 2021.
\newblock URL \url{https://proceedings.mlr.press/v139/touvron21a.html}.

\bibitem[Vaswani et~al.(2017)Vaswani, Shazeer, Parmar, Uszkoreit, Jones, Gomez, Kaiser, and Polosukhin]{vaswani2017attention}
Ashish Vaswani, Noam Shazeer, Niki Parmar, Jakob Uszkoreit, Llion Jones, Aidan~N. Gomez, \L{}ukasz Kaiser, and Illia Polosukhin.
\newblock Attention is all you need.
\newblock In I.~Guyon, U.~Von Luxburg, S.~Bengio, H.~Wallach, R.~Fergus, S.~Vishwanathan, and R.~Garnett (eds.), \emph{Advances in Neural Information Processing Systems}, volume~30. Curran Associates, Inc., 2017.
\newblock URL \url{https://proceedings.neurips.cc/paper_files/paper/2017/file/3f5ee243547dee91fbd053c1c4a845aa-Paper.pdf}.

\bibitem[Wang et~al.(2020)Wang, Li, Khabsa, Fang, and Ma]{wang2020linformer}
Sinong Wang, Belinda~Z Li, Madian Khabsa, Han Fang, and Hao Ma.
\newblock Linformer: Self-attention with linear complexity.
\newblock \emph{arXiv preprint arXiv:2006.04768}, 2020.

\bibitem[Xiong et~al.(2021)Xiong, Zeng, Chakraborty, Tan, Fung, Li, and Singh]{xiong2021nystromformer}
Yunyang Xiong, Zhanpeng Zeng, Rudrasis Chakraborty, Mingxing Tan, Glenn Fung, Yin Li, and Vikas Singh.
\newblock Nystr{\"o}mformer: A nystr{\"o}m-based algorithm for approximating self-attention.
\newblock In \emph{Proceedings of the AAAI Conference on Artificial Intelligence}, 2021.

\bibitem[Zaheer et~al.(2020)Zaheer, Guruganesh, Dubey, Ainslie, Alberti, Ontanon, Pham, Ravula, Wang, Yang, and Ahmed]{zaheer2020big}
Manzil Zaheer, Guru Guruganesh, Kumar~Avinava Dubey, Joshua Ainslie, Chris Alberti, Santiago Ontanon, Philip Pham, Anirudh Ravula, Qifan Wang, Li~Yang, and Amr Ahmed.
\newblock Big bird: Transformers for longer sequences.
\newblock 33:\penalty0 17283--17297, 2020.
\newblock URL \url{https://proceedings.neurips.cc/paper_files/paper/2020/file/c8512d142a2d849725f31a9a7a361ab9-Paper.pdf}.

\bibitem[Zeng et~al.(2023)Zeng, Wang, Zhou, Ling, and Wang]{zeng2023foresee}
Qiuhao Zeng, Wei Wang, Fan Zhou, Charles Ling, and Boyu Wang.
\newblock Foresee what you will learn: data augmentation for domain generalization in non-stationary environment.
\newblock In \emph{Proceedings of the AAAI conference on artificial intelligence}, volume~37, pp.\  11147--11155, 2023.

\bibitem[Zeng et~al.(2024{\natexlab{a}})Zeng, Huang, Chen, Ling, and Wang]{zeng2024towards}
QIUHAO Zeng, Long-Kai Huang, Qi~Chen, Charles~X Ling, and Boyu Wang.
\newblock Towards understanding evolving patterns in sequential data.
\newblock \emph{Advances in Neural Information Processing Systems}, 37:\penalty0 132747--132773, 2024{\natexlab{a}}.

\bibitem[Zeng et~al.(2024{\natexlab{b}})Zeng, Shui, Huang, Liu, Chen, Ling, and Wang]{zeng2024latent}
QIUHAO Zeng, Changjian Shui, Long-Kai Huang, Peng Liu, Xi~Chen, Charles Ling, and Boyu Wang.
\newblock Latent trajectory learning for limited timestamps under distribution shift over time.
\newblock In \emph{The Twelfth International Conference on Learning Representations}, 2024{\natexlab{b}}.
\newblock URL \url{https://openreview.net/forum?id=bTMMNT7IdW}.

\bibitem[Zhai et~al.(2023)Zhai, Likhomanenko, Littwin, Busbridge, Ramapuram, Zhang, Gu, and Susskind]{zhai2023stabilizing}
Shuangfei Zhai, Tatiana Likhomanenko, Etai Littwin, Dan Busbridge, Jason Ramapuram, Yizhe Zhang, Jiatao Gu, and Joshua~M. Susskind.
\newblock Stabilizing transformer training by preventing attention entropy collapse.
\newblock In Andreas Krause, Emma Brunskill, Kyunghyun Cho, Barbara Engelhardt, Sivan Sabato, and Jonathan Scarlett (eds.), \emph{Proceedings of the 40th International Conference on Machine Learning}, volume 202 of \emph{Proceedings of Machine Learning Research}, pp.\  40770--40803. PMLR, 23--29 Jul 2023.
\newblock URL \url{https://proceedings.mlr.press/v202/zhai23a.html}.

\bibitem[Zhang et~al.(2024)Zhang, Bhatia, Kumbong, and Re]{zhang2024the}
Michael Zhang, Kush Bhatia, Hermann Kumbong, and Christopher Re.
\newblock The hedgehog \& the porcupine: Expressive linear attentions with softmax mimicry.
\newblock In \emph{The Twelfth International Conference on Learning Representations}, 2024.
\newblock URL \url{https://openreview.net/forum?id=4g02l2N2Nx}.

\bibitem[Zhu \& Soricut(2021)Zhu and Soricut]{zhu-soricut-2021-h}
Zhenhai Zhu and Radu Soricut.
\newblock {H}-transformer-1{D}: Fast one-dimensional hierarchical attention for sequences.
\newblock In Chengqing Zong, Fei Xia, Wenjie Li, and Roberto Navigli (eds.), \emph{Proceedings of the 59th Annual Meeting of the Association for Computational Linguistics and the 11th International Joint Conference on Natural Language Processing (Volume 1: Long Papers)}, pp.\  3801--3815, Online, August 2021. Association for Computational Linguistics.
\newblock \doi{10.18653/v1/2021.acl-long.294}.
\newblock URL \url{https://aclanthology.org/2021.acl-long.294}.

\bibitem[Zhuoran et~al.(2021)Zhuoran, Mingyuan, Haiyu, Shuai, and Hongsheng]{roy2021efficient}
Shen Zhuoran, Zhang Mingyuan, Zhao Haiyu, Yi~Shuai, and Li~Hongsheng.
\newblock Efficient attention: Attention with linear complexities.
\newblock In \emph{2021 IEEE Winter Conference on Applications of Computer Vision (WACV)}, 2021.

\end{thebibliography}
\bibliographystyle{iclr2025_conference}

\appendix
\newpage

\section{Theoretical Analysis}

The recent paper, Reformer, proposed the Shared-$QK$ Transformer that shares the linear projection layer for keys and queries. It reduces the number of parameters and memory space while not affecting the model performance.
https://arxiv.org/abs/2001.0445a very simple but efficient technique.
\begin{lemma}[Johnson–Lindenstrauss Lemma]
For any $0 < \epsilon < 1$ and any integer $m$, let $d$ be a positive integer such that $d = \Omega (\frac{ \ln m}{\epsilon^2})$. Then for any set $x$ of $m$ points in $\mathbb{R}^D$, there exists a map $f: \mathbb{R}^D \to \mathbb{R}^d$ such that for all $x_i, x_j \in \mathcal{X}$,
\begin{align}
(1 - \epsilon) \|x_i - x_j\|^2 \leq \|f(x_i) - f(x_j)\|^2 \leq (1 + \epsilon) \|x_i - x_j\|^2.
\end{align}
\end{lemma}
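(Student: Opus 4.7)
The plan is to prove existence via the probabilistic method using random linear projections, so I would construct a candidate map and show that a random draw satisfies the bilateral distortion bound on all $\binom{m}{2}$ pairs with positive probability. Specifically, I would take $f(x) = \tfrac{1}{\sqrt{d}}\mA x$ where $\mA \in \mathbb{R}^{d \times D}$ has i.i.d.\ standard Gaussian entries. The key properties I exploit are linearity of $f$ (so distortion for a pair $(x_i,x_j)$ depends only on the single vector $v = x_i - x_j$) and rotational invariance of the Gaussian (so the distribution of $\|f(v)\|^2/\|v\|^2$ depends only on the dimensions $d$ and $D$, not on $v$).

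The core analytic step is a one-vector concentration bound: for any fixed nonzero $v \in \mathbb{R}^D$,
\[
\mathbb{P}\!\left[\,\bigl|\,\|f(v)\|^2 - \|v\|^2\,\bigr| > \epsilon \|v\|^2\right] \;\leq\; 2\exp\!\bigl(-c\,\epsilon^2 d\bigr)
\]
for an absolute constant $c>0$. To obtain this, I would normalize and write $\|f(v)\|^2/\|v\|^2 = \tfrac{1}{d}\sum_{i=1}^d Z_i^2$ with $Z_i \sim \mathcal{N}(0,1)$ i.i.d., identifying the right-hand side as a $\chi^2_d/d$ variable. A standard Chernoff argument on the moment generating function of $\chi^2_d$ (or, more cleanly, the Laurent--Massart tail inequalities) yields the stated sub-exponential bound for all $\epsilon \in (0,1)$.

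From there the rest is routine. I would apply the one-vector bound to each of the at most $\binom{m}{2} \leq m^2/2$ difference vectors $x_i - x_j$ and take a union bound, giving a failure probability at most $m^2 \exp(-c\epsilon^2 d)$. Choosing $d \geq C \ln m/\epsilon^2$ with $C > 2/c$ makes this quantity strictly less than $1$, so by the probabilistic method there exists at least one realization of $\mA$ (and hence an $f$) for which
\[
(1-\epsilon)\|x_i-x_j\|^2 \leq \|f(x_i)-f(x_j)\|^2 \leq (1+\epsilon)\|x_i-x_j\|^2
\]
holds simultaneously for all $i,j$, which is the claim.

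The main obstacle is the one-vector $\chi^2$ concentration step: the Chernoff bound requires controlling the MGF of a $\chi^2_d$ random variable on both tails and matching the two-sided exponent to a clean $c\epsilon^2 d$ form valid uniformly on $\epsilon \in (0,1)$. Everything else---linearity, rotational reduction to a fixed $v$, the union bound, and the final choice of $C$---is bookkeeping, and the $\Omega(\ln m/\epsilon^2)$ scaling emerges automatically from balancing the union-bound factor $m^2$ against the exponential tail.
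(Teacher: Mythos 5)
The paper never actually proves this lemma: it is stated in the main text and restated in the appendix purely as a cited classical result (Johnson and Lindenstrauss, 1986), and is then used as a black box inside the proof of the paper's main theorem bounding the risk of $h_{\mathrm{attn}}$ in terms of $d_K$. So there is no in-paper proof to compare yours against; what matters is whether your argument stands alone, and it does. Your proposal is the standard modern proof (the Indyk--Motwani/Dasgupta--Gupta simplification of the original argument): take $f(x) = \frac{1}{\sqrt{d}}\mA x$ with i.i.d.\ Gaussian $\mA$, use linearity plus rotational invariance to reduce every pair to the concentration of a single $\chi^2_d/d$ variable, establish the two-sided tail bound $2\exp(-c\epsilon^2 d)$ uniformly on $\epsilon \in (0,1)$, union bound over the at most $m^2/2$ difference vectors, and conclude by the probabilistic method with $d \geq C\ln m/\epsilon^2$, $C > 2/c$. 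Each step is sound, and the $\chi^2$ concentration you flag as the crux does go through: the Chernoff bound with optimal parameter $\lambda = \epsilon/(2(1+\epsilon))$ gives upper-tail exponent $\frac{d}{2}\left(\epsilon - \ln(1+\epsilon)\right) \geq d\epsilon^2/12$ on $(0,1)$, the lower tail is analogous (and slightly stronger), and quoting Laurent--Massart is an equally valid shortcut. Your bookkeeping is also right: the failure probability $m^{2-cC}$ is below $1$ for $m \geq 2$, and $m=1$ is vacuous. One point worth noting in the context of this paper: your construction yields a \emph{linear} map $f$, which is exactly the form the paper's main theorem assumes for the shared key/query projection, so your proof is not only correct but matches the way the lemma is consumed downstream; the original Johnson--Lindenstrauss argument (random orthogonal projection plus concentration of measure on the sphere) would serve equally well but is less elementary.
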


\begin{assumption}
    Let $\alpha \in \Delta_{m-1}$ be an element of the $m$-dimensional simplex, defined as $\Delta_{m-1} \triangleq \left\{\alpha \in \mathbb{R}^m \mid \alpha_i \geq 0, \sum_{i=1}^{m} \alpha_i = 1\right\}$. Assume that $h_{\mathrm{attn}}$ equipped with $\alpha$ can achieve an optimal learnable similarity function $\Gamma$, where the attention scores are given by $\alpha = \mathrm{softmax}(\Gamma(f_k(x_i), f_q(x)))$, such that $\Gamma$ is trained to be optimal to have the minimal expected risk: $ \min_{\alpha}\|h_{\mathrm{attn}}(x,S_x;\alpha) - y\|$, where $h_{\mathrm{attn}}$ denotes the attention-based hypothesis, $x$ is the input, and $S_x$ is the context.
\end{assumption}

\begin{lemma}\label{lemma:monotone}
    Let $\alpha\in \mathbb{R}^m$, $\forall h_1:\mathbb{R}^m\rightarrow \mathbb{R}$ and $\forall h_2:\mathbb{R}^m\rightarrow \mathbb{R}$. Assume that $ h_1(\alpha)\le h_2(\alpha)$. Then $\min_{\alpha} h_1(\alpha)\le \min_{\alpha} h_2(\alpha)$.
\end{lemma}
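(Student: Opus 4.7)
The plan is to give a direct one-line argument based on picking the minimizer of the larger function and chaining two elementary inequalities. There is essentially no machinery required beyond the definition of minimum, so I will keep the sketch short and focused on making the logical order transparent.

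First, I will assume that the minimum of $h_2$ is attained (or, if not, switch to an infimum and take a minimizing sequence; the argument is identical). Let $\alpha_2^{\star}$ denote a point in $\arg\min_{\alpha \in \mathbb{R}^m} h_2(\alpha)$. Then by definition, $\min_{\alpha} h_2(\alpha) = h_2(\alpha_2^{\star})$.

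Next, I apply the hypothesis $h_1(\alpha) \leq h_2(\alpha)$ pointwise at the specific point $\alpha = \alpha_2^{\star}$, which yields $h_1(\alpha_2^{\star}) \leq h_2(\alpha_2^{\star})$. Finally, because $\alpha_2^{\star}$ is merely one feasible choice when minimizing $h_1$, it upper-bounds the minimum of $h_1$: $\min_{\alpha} h_1(\alpha) \leq h_1(\alpha_2^{\star})$. Chaining these three relations gives $\min_{\alpha} h_1(\alpha) \leq h_1(\alpha_2^{\star}) \leq h_2(\alpha_2^{\star}) = \min_{\alpha} h_2(\alpha)$.

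There is really no hard part here. The only subtlety worth flagging is the existence of a minimizer: since the lemma is stated with $\min$ rather than $\inf$ and no continuity or compactness hypothesis on $h_1, h_2$ is imposed, the cleanest presentation is to treat both sides as infima, in which case I would instead fix an arbitrary $\varepsilon > 0$, pick $\alpha_{\varepsilon}$ with $h_2(\alpha_{\varepsilon}) \leq \inf_{\alpha} h_2(\alpha) + \varepsilon$, apply the pointwise inequality, and then let $\varepsilon \to 0$. Either formulation yields the claim in two lines, so I expect no obstacle.
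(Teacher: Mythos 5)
Your proof is correct and follows essentially the same route as the paper's: evaluate the pointwise inequality at a minimizer of $h_2$ and chain it with $\min_{\alpha} h_1(\alpha) \leq h_1(\alpha_2^{\star})$. Your additional remark about replacing $\min$ with $\inf$ (since no existence of minimizers is guaranteed) is a careful refinement the paper omits, but it does not change the substance of the argument.
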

\begin{proof}
    Let $\alpha_1 = \arg\min_\alpha h_1(\alpha)$ and $\alpha_2 = \arg\min_\alpha h_2(\alpha)$. Then, $h_1(\alpha_2) \le h_2(\alpha_2) $ due to condition $h_1(\alpha) \le h_2(\alpha)$. We also have $h_1(\alpha_1)\le h_1(\alpha_2) $ due to condition $\alpha_1$ achieving the minimum of $h_1$. To sum up, we have $h_1(\alpha_1) \le h_2(\alpha_2) $. 
    
\end{proof}

\begin{lemma}\label{lemma:sets-intersect}~\citep{shalev2014understanding}
Let $C_1, \ldots, C_r$ be a collection of subsets of some domain set, $\mathcal{X}$. Let $S$ be a sequence of $m$ points sampled i.i.d. according to some probability distribution, $\mathcal{D}$, over $\mathcal{X}$. Then,
$$
\mathbb{E}_{S \sim \mathcal{D}^m} \left[ \sum_{i: C_i \cap S = \emptyset} \mathbb{P}[C_i] \right] \leq \frac{r}{m e}.
$$
\end{lemma}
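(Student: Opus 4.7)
The plan is to rewrite the target quantity as a sum of per-set contributions using linearity of expectation, evaluate each contribution exactly using the i.i.d.\ structure, and then control each term with an elementary one-variable inequality.

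First I would push the sum through the expectation. Writing $p_i := \mathbb{P}[C_i]$ and rewriting the event $\{C_i \cap S = \emptyset\}$ with an indicator, we have
\begin{equation*}
\mathbb{E}_{S \sim \mathcal{D}^m}\!\left[\sum_{i: C_i \cap S = \emptyset} p_i\right]
= \sum_{i=1}^{r} p_i \, \mathbb{E}_{S \sim \mathcal{D}^m}\!\left[\mathbb{I}\{C_i \cap S = \emptyset\}\right]
= \sum_{i=1}^{r} p_i \, \mathbb{P}_{S \sim \mathcal{D}^m}[C_i \cap S = \emptyset].
\end{equation*}
Because $S = (x_1, \ldots, x_m)$ is drawn i.i.d.\ from $\mathcal{D}$, the event $C_i \cap S = \emptyset$ is exactly the event that each of the $m$ independent draws avoids $C_i$, so
\begin{equation*}
\mathbb{P}_{S \sim \mathcal{D}^m}[C_i \cap S = \emptyset] = (1 - p_i)^m,
\end{equation*}
and the quantity to bound becomes $\sum_{i=1}^{r} p_i (1-p_i)^m$.

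Next I would establish the pointwise bound $x(1-x)^m \leq \tfrac{1}{me}$ for every $x \in [0,1]$. The cleanest route is the inequality $(1-x)^m \leq e^{-mx}$ (valid for $x \in [0,1]$ by $1-x \leq e^{-x}$), giving $x(1-x)^m \leq x e^{-mx}$. The function $x \mapsto x e^{-mx}$ is maximized on $[0,\infty)$ at $x = 1/m$ with value $\tfrac{1}{me}$, so $x(1-x)^m \leq \tfrac{1}{me}$ uniformly on $[0,1]$.

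Applying this bound to each term and summing gives
\begin{equation*}
\sum_{i=1}^{r} p_i (1-p_i)^m \;\leq\; \sum_{i=1}^{r} \frac{1}{me} \;=\; \frac{r}{me},
\end{equation*}
which is exactly the claim. There is no real obstacle here: the only non-bookkeeping step is the calculus inequality, which is standard. The key conceptual move is simply recognizing that independence converts the ``all $m$ samples miss $C_i$'' event into the product $(1-p_i)^m$, after which a single-variable optimization finishes the argument.
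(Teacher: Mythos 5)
Your proof is correct and follows essentially the same route as the paper's: linearity of expectation, independence yielding $(1-p_i)^m$, the bound $1-x \leq e^{-x}$, and the elementary calculus fact that $x e^{-mx}$ peaks at $\frac{1}{me}$. The only cosmetic difference is that you bound each term individually before summing, whereas the paper bounds the sum by $r$ times its maximal term first; the two are interchangeable.
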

\begin{proof} \quad From the linearity of expectation, we can rewrite
\begin{align*}
\mathbb{E}_S \left[\sum_{i: C_i \cap S = \emptyset} \mathbb{P}[C_i] \right] &= \sum_{i=1}^{r} \mathbb{P}[C_i] \mathbb{E}_S \left[\mathbf{1}_{C_i \cap S = \emptyset}\right].
\end{align*}

Next, for each $i$ we have
\begin{align*}
\mathbb{E}_S \left[\mathbf{1}_{C_i \cap S = \emptyset}\right] &= \mathbb{P}_S[C_i \cap S = \emptyset] = (1 - \mathbb{P}[C_i])^m \leq e^{-\mathbb{P}[C_i] m}.
\end{align*}

Combining the preceding two equations, we get
\begin{align*}
\mathbb{E}_S \left[\sum_{i: C_i \cap S = \emptyset} \mathbb{P}[C_i] \right] &\leq \sum_{i=1}^{r} \mathbb{P}[C_i] e^{-\mathbb{P}[C_i] m} \leq r \max_i \mathbb{P}[C_i] e^{-\mathbb{P}[C_i] m}.
\end{align*}

Finally, by elementary calculus, $\underset{a}{\max} \ ae^{-ma} \leq \frac{1}{me}$, concluding the proof.
\end{proof}

\begin{theorem}
Let $\mathcal{X} \in \mathbb{R}^d$, $\mathcal{Y} \in \mathbb{R}^D$, and $\mathcal{D}$ be a distribution over $\mathcal{X} \times \mathcal{Y}$ for which the conditional probability function, $h: \mathbb{R}^d \rightarrow \mathbb{R}^{D}$, is a $l$-Lipschitz function. Let $h$ denote a hypothesis, and $h_{\mathrm{attn}}$ denote the one-layer attention model to aggregate the predictions of a sample set $S \sim \mathcal{D}^m$ to predict for another i.i.d sample $x$. Specifically, here we assume the same linear map for a key mapping $f_k$ and a query mapping $f_q$ as $f:\mathbb{R}^d \rightarrow [-B,B]^{d_K}$ where $B$ is the bound of projection of $x$ by $f$, and we assume the value mapping $f_v$ be the Bayes optimal rule $h^*=\mathbb{E}[Y|X=x]$, which is the hypothesis that minimizes $L_{\mathcal{D}}(h)$ over all functions. Then, the expected risk of the regression tasks $L_{\mathcal{D}}(h) = \mathbb{E}_{(x,y) \sim \mathcal{D}}\|h(x) - y\|$ with $h$ as $h_{\mathrm{attn}}$ can be upper bounded by
$$\mathbb{E}_{S \sim \mathcal{D}^m} \left[L_{\mathcal{D}}(h_{\mathrm{attn}})\right] \leq L_{\mathcal{D}}(h^*) +\frac{4lc \sqrt{d_K}B  m^{-1/(d_K+1)}}{\sqrt{1-\sqrt{\frac{C\ln m}{d_K}}}}.$$
\end{theorem}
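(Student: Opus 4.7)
My plan is to reduce the attention hypothesis $h_{\mathrm{attn}}$ to a $1$-nearest-neighbor predictor in the projected $d_K$-dimensional cube, and then combine the Johnson--Lindenstrauss guarantee with a standard covering-number argument for $1$-NN regression on $[-B,B]^{d_K}$, in the spirit of the classical analysis of $1$-NN for Lipschitz regressors (Chapter~19 of Shalev-Shwartz and Ben-David).

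I would first invoke Assumption~\ref{assum:optimal-attn} together with Lemma~\ref{lemma:monotone}: since the optimal $\alpha$ minimizes the expected loss, plugging in \emph{any} particular choice of attention weights furnishes an upper bound. I would take the one-hot vector $\alpha^{\star}$ concentrated on $i^{\star} = \arg\min_i \|f(x_i) - f(x)\|$, which is realizable in the softmax parametrization by setting $\Gamma(f_k(x_i), f_q(x)) = -\beta\|f_k(x_i) - f_q(x)\|^{2}$ and letting $\beta \to \infty$. Because $f_v = h^{\star}$ by hypothesis, the resulting prediction is $h^{\star}(x_{i^{\star}})$, so it suffices to bound $\mathbb{E}_{S,x,y}\|h^{\star}(x_{i^{\star}}) - y\|$.

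Next, I would add and subtract $h^{\star}(x)$, apply the triangle inequality, and note that the $\|h^{\star}(x)-y\|$ piece contributes exactly $L_{\mathcal{D}}(h^{\star})$. The remaining piece $\|h^{\star}(x_{i^{\star}}) - h^{\star}(x)\|$ is controlled by $l\|x_{i^{\star}} - x\|$ using the Lipschitz assumption, and I would apply Lemma~\ref{lemma:jl} to the finite set $\{x, x_1,\dots,x_m\}$ with $\epsilon = \sqrt{C\ln m / d_K}$ to transfer this distance back to the projected space via $\|x_{i^{\star}} - x\| \le \|f(x_{i^{\star}}) - f(x)\|/\sqrt{1-\epsilon}$, which is what produces the $\sqrt{1-\sqrt{C\ln m/d_K}}$ denominator in the claimed bound.

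Finally, I would control the expected nearest-neighbor distance in $[-B,B]^{d_K}$ by a covering argument: partition the cube into $T^{d_K}$ sub-cubes of side $2B/T$ and diameter $2B\sqrt{d_K}/T$, use the diameter bound whenever $f(x)$ lies in a cell hit by $S$, and fall back on the full cube diameter $2B\sqrt{d_K}$ otherwise. Lemma~\ref{lemma:sets-intersect} bounds the expected mass of unhit cells by $T^{d_K}/(me)$, giving $\mathbb{E}\|f(x_{i^{\star}}) - f(x)\| \le 2B\sqrt{d_K}/T + 2B\sqrt{d_K}\,T^{d_K}/(me)$. Balancing with $T = (me/d_K)^{1/(d_K+1)}$ yields a rate of order $B\sqrt{d_K}\,m^{-1/(d_K+1)}$, and multiplying by $l/\sqrt{1-\epsilon}$ gives the claimed bound (with $c$ absorbing the $(d_K/e)^{1/(d_K+1)}$ factor). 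The main obstacle is Step~1: justifying that the softmax attention class can approximate the projected $1$-NN rule arbitrarily well so that Assumption~\ref{assum:optimal-attn} gives a non-vacuous upper bound --- the temperature-limit argument above works but its dependence on the concrete parametrization of $\Gamma$ deserves care, and one must also verify that Lemma~\ref{lemma:jl} can be applied to $\{x\} \cup S$ uniformly over the random draw of $S$.
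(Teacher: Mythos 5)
Your proposal follows essentially the same route as the paper's own proof: both reduce the optimally-weighted attention (via Assumption~\ref{assum:optimal-attn} and Lemma~\ref{lemma:monotone}) to a nearest-neighbor predictor in the projected space, split off $L_{\mathcal{D}}(h^*)$ by the triangle inequality, convert $\|h^*(x_{i})-h^*(x)\|$ to projected distances via Lipschitzness and the Johnson--Lindenstrauss bound with $\epsilon=\sqrt{C\ln m/d_K}$, and control the expected nearest-neighbor distance in $[-B,B]^{d_K}$ by the box-covering argument with Lemma~\ref{lemma:sets-intersect}, balancing the cell size to get the $m^{-1/(d_K+1)}$ rate. The differences are cosmetic --- you fix the one-hot $\alpha$ on the projected nearest neighbor at the outset, whereas the paper carries $\min_{\alpha}$ through the chain of inequalities and collapses it to $\|k_{\pi_1(q)}-q\|$ at the end --- and the realizability and JL-applicability caveats you flag are present (and unaddressed) in the paper's proof as well.
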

\begin{proof}
$\mathbb{E}_S[L_{\mathcal{D}}(h)]$ is the root mean square error (RMSE) between the prediction and $y$ conditioned on a sampled set $S$ and an additional example $(x, y)$, such that the label of $\pi_1(x)$ is different from $y$. In other words, we first sample $m$ examples, $S_x = \{{x}_1, \dots, x_m\}$, according to $\mathcal{D}_\mathcal{X}$, and an additional example, $(x,y)$. It follows that
\begin{align*}
  \mathbb{E}_S[L_{\mathcal{D}}(h_{\mathrm{attn}})] &= \mathbb{E}_{S_x \sim \mathcal{D}_\mathcal{X}^m, x \sim \mathcal{D}_\mathcal{X}} \mathbb{E}_{y \sim \mathcal{D}_{\mathcal{Y}|\mathcal{X}}} \min_{\alpha}\|h_{\mathrm{attn}}(x,S_x;\alpha) - y\|  \\ 
  &= \mathbb{E}_{S_x \sim \mathcal{D}_\mathcal{X}^m, x \sim \mathcal{D}_\mathcal{X}} \mathbb{E}_{y \sim \mathcal{D}_{\mathcal{Y}|\mathcal{X}}} \min_{\alpha}\|\sum_{i=1}^m \alpha_i h^*(x_i) - y\|  \\
  &\le \mathbb{E}_{S_x \sim \mathcal{D}_\mathcal{X}^m, x \sim \mathcal{D}_\mathcal{X}} \mathbb{E}_{y \sim \mathcal{D}_{\mathcal{Y}|\mathcal{X}}} \left[ \min_{\alpha}\|\sum_{i=1}^m \alpha_i h^*(x_i) - h^*(x)\| + \| h^*(x) - y\| \right]\\
  &= L_{\mathcal{D}}(h^*) + \mathbb{E}_{S_x \sim \mathcal{D}_\mathcal{X}^m, x \sim \mathcal{D}_\mathcal{X}} \mathbb{E}_{y \sim \mathcal{D}_{\mathcal{Y}|\mathcal{X}}}  \min_{\alpha}\|\sum_{i=1}^m \alpha_i h^*(x_i) - h^*(x)\|,
\end{align*}
where $\alpha_i$ is the attention score between $x_i$ and $x$. The inequality follows the Cauchy-Schwarz inequality and Lemma~\ref{lemma:monotone}.
\begin{align*}
    &\mathbb{E}_{S_x \sim \mathcal{D}_\mathcal{X}^m, x \sim \mathcal{D}_\mathcal{X}} \mathbb{E}_{y \sim \mathcal{D}_{\mathcal{Y}|\mathcal{X}}}  \min_{\alpha}\|\sum_{i=1}^m \alpha_i h^*(x_i) - h^*(x)\| \le  \mathbb{E}_{S_x \sim \mathcal{D}_\mathcal{X}^m, x \sim \mathcal{D}_\mathcal{X}} \mathbb{E}_{y \sim \mathcal{D}_{\mathcal{Y}|\mathcal{X}}}  \min_{\alpha}\sum_{i=1}^m \alpha_i \| h^*(x_i) - h^*(x)\|\\
    &\le \mathbb{E}_{S_x \sim \mathcal{D}_\mathcal{X}^m, x \sim \mathcal{D}_\mathcal{X}} \mathbb{E}_{y \sim \mathcal{D}_{\mathcal{Y}|\mathcal{X}}} \left[ l\cdot \min_{\alpha} \sum_{i=1}^m \alpha_i\| x_i - x\|\right]
     \le \mathbb{E}_{S_x \sim \mathcal{D}_\mathcal{X}^m, x \sim \mathcal{D}_\mathcal{X}} \mathbb{E}_{y \sim \mathcal{D}_{\mathcal{Y}|\mathcal{X}}} \left[ l\cdot \min_{\alpha}\sum_{i=1}^m \alpha_i\frac{\| f(x_i) - f(x)\|}{\sqrt{1-\sqrt{\frac{C\ln m}{d_K}}}}\right]\\
    &= \mathbb{E}_{S_x \sim \mathcal{D}_\mathcal{X}^m, x \sim \mathcal{D}_\mathcal{X}} \mathbb{E}_{y \sim \mathcal{D}_{\mathcal{Y}|\mathcal{X}}} \left[ l\cdot \min_{\alpha} \sum_{i=1}^m \alpha_i\frac{\|k_i - q\|}{\sqrt{1-\sqrt{\frac{C\ln m}{d_K}}}}\right]
\end{align*}
The first inequality follows from the Cauchy-Schwarz inequality; the second inequality follows from the $l$-Lipschitzness of $h^*$; the third inequality follows from the Johnson–Lindenstrauss Lemma, where we define $\epsilon$ in the Johnson–Lindenstrauss Lemma as $\epsilon = \sqrt{\frac{C\ln m}{d_K}}$, where $C$ is a constant; $k_i=f(x_i)$ and $q=f(x)$. It is obvious that $$ \min_{\alpha} \sum_{i=1}^m \alpha_i\|k_i - q\|=\|k_{\pi_1(q)} - q\|$$
where $k_{\pi_1(q)}$ is the closest $k_i$ to $q$. Thus we have
\begin{align*}
    &\mathbb{E}_{S_x \sim \mathcal{D}_\mathcal{X}^m, x \sim \mathcal{D}_\mathcal{X}} \mathbb{E}_{y \sim \mathcal{D}_{\mathcal{Y}|\mathcal{X}}}  \min_{\alpha}\left\|\sum_{i=1}^m \alpha_i h^*(x_i) - h^*(x)\right\| = \mathbb{E}_{S_x \sim \mathcal{D}_\mathcal{X}^m, x \sim \mathcal{D}_\mathcal{X}} \mathbb{E}_{y \sim \mathcal{D}_{\mathcal{Y}|\mathcal{X}}} \left[ l\cdot \frac{\|k_{\pi_1(q)} - q\|}{\sqrt{1-\sqrt{\frac{C\ln m}{d_K}}}}.\right]
\end{align*}
Fix some $\zeta = 2B/T$. For some integer $T$, let $r = T^{d_K}$ and $C_1, \dots, C_r$ be the cover of the set $\mathcal{X}$ using boxes of length $\zeta$. Namely, for every $(\xi_1, \dots, \xi_d) \in [T]^d$, there exists a set $C_i$ of the form $\{ k : \forall j, k_j \in [2B(\xi_j - 1)/T, 2B\xi_j / T]\}$. 

For each $k, q$ in the same box we have $\|k - q\| \leq \sqrt{d_K}B \, \zeta$. Otherwise, $\|k - q\| \leq \sqrt{d_K}B$. Therefore,

\begin{align*}
\mathbb{E}_{S_x, x} \left[\left\|k_{\pi_1(q)} - q\right\|\right] &\leq \mathbb{E}_S \left[\mathbb{P}\left[\bigcup_{i:C_i \cap S_x = \emptyset} C_i\right] \sqrt{d_K}B + \mathbb{P}\left[\bigcup_{i:C_i \cap S_x \neq \emptyset} C_i\right] B\zeta \sqrt{d_K}\right],
\end{align*}
and by combining Lemma \ref{lemma:sets-intersect} with the trivial bound 
\begin{align*}
\mathbb{P}[\bigcup_{i:C_i \cap S_x \neq \emptyset} C_i] \leq 1,
\end{align*}
we get that
\begin{align*}
\mathbb{E}_{\mathbf{x}, S_x} \left[\| k_{\pi_1(q)}-q\|\right] &\leq \sqrt{d_K} B\left(\frac{r}{me} + \zeta\right).
\end{align*}
Since the number of boxes is $r = (1/\zeta)^{d_K}$, it follows that
\begin{align*}
\mathbb{E}_{S_x, \mathbf{x}} \left[\left\| k_{\pi_1(q)}-q\right\|\right] &\leq \sqrt{d_K} B\left(\frac{2^{d_K} \zeta^{-d_K}}{me} + \zeta\right).
\end{align*}
Setting $\zeta = 2 m^{-1/(d_K+1)}$ and noting that
\begin{align*}
\frac{2^{d_K} \zeta^{-d}}{me} + \zeta &= \frac{2^{d_K} 2^{-d_K} m^{d_K/\left(d_K+1\right)}}{me} + 2 m^{-1/\left(d_K+1\right)} \\
&= m^{-1/\left(d_K+1\right)}\left(\frac{1}{e} + 2\right) \leq 4 m^{-1/(d_K+1)},
\end{align*}
then combining the preceding with previous results, we obtain
\begin{align*}
\mathbb{E}_S \left[L_{\mathcal{D}}(h_{\mathrm{attn}})\right] &\leq L_{\mathcal{D}}(h^\star) + \frac{4lc \sqrt{d_K}B  m^{-1/(d_K+1)}}{\sqrt{1-\sqrt{\frac{C\ln m}{d_K}}}}.
\end{align*}
\end{proof}

{\color{black}
\begin{figure}[t!]
\centering
\includegraphics[width=10cm]{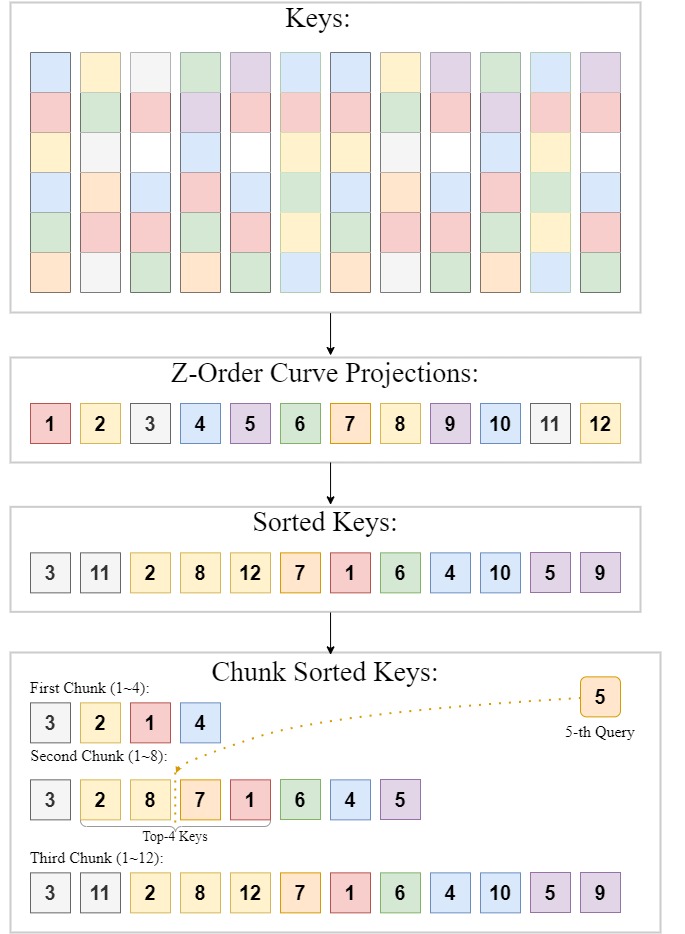}
\caption{Illustration of the chunking process in \methodname: Keys are projected into one-dimensional space using $Z$-order curves, sorted, and partitioned into chunks for efficient retrieval of top-$k$ keys for each query.}
\end{figure}

\section{An Illustrative Example of Efficient Top-$k$ Selection via $Z$-order Curve Chunking}

In \methodname, the chunking process is a crucial step for efficient key-query matching in large-scale attention mechanisms. The process begins by projecting the high-dimensional keys into a one-dimensional space using $Z$-order curves, which preserve spatial locality. These projections create a linear representation of the keys, as shown in the second row of the figure.

After projection, the keys are radix sorted in $\mathcal{O}(N)$ into ascending order of their $Z$-order integer values, enabling efficient binary searching in $\mathcal{O}(N)$ for the most relevant keys in the one-dimensional space. Sorting ensures that keys that are spatially close in the original multi-dimensional space remain close in the projected space, making the retrieval process computationally efficient.

Next, the sorted keys are partitioned into chunks. Each chunk contains a fixed number of keys, and queries are matched with the keys within their respective chunks. This chunk-based structure facilitates parallel processing, where each query can efficiently search for its top-$k$ nearest keys within a local subset of the sorted key space. For instance, in the figure, the 5th query retrieves its top-$4$ keys from the chunks containing its most relevant keys.

This chunking approach significantly reduces the computational overhead compared to searching the entire key space for each query, while leveraging the locality-preserving properties of $Z$-order curves. The result is a scalable and efficient mechanism for top-$k$ selection in \methodname, ensuring both speed and accuracy for attention-based tasks.
\begin{algorithm}[t]

\caption{ZETA Top-$k$ Attention Using $Z$-order Curves and Chunking}
\SetAlgoLined
\KwData{Keys $\mathcal{K}$ of size $M \times d$, sequence length $N$, chunk size $M$, query index $i$, top-$k$ value $k$}
\KwResult{Top-$k$ nearest keys for each query}
\BlankLine
\textbf{Step 1: $Z$-order Curve Projection}\;
\For{each key vector in $\mathcal{K}$}{
    Project each $d$-dimensional key vector into a one-dimensional representation using $Z$-order curve projection\;
}

\textbf{Step 2: Sorting of Keys}\;
Sort the projected one-dimensional keys in ascending order\;

\textbf{Step 3: Chunk Division}\;
Divide the sorted keys into multiple chunks based on the chunk size $M$\;

\textbf{Step 4: Chunk-wise Causal Masking}\;
\For{each query at position $i$}{
    Determine chunk index $m = \left\lfloor \frac{i}{M} \right\rfloor$\;
    Search only in the first $m$ chunks for the top-$k$ keys to attend to\;
    Exclude keys from positions $j > m \times M$\;
}

\textbf{Step 5: Nearest Neighbor Search}\;
\For{each query}{
    Perform nearest neighbor search within the selected chunks to find top-$k$ tokens\;
    Collect the indices of the nearest neighbors $I_q$ while maintaining causal constraints\;
}

\Return{Top-$k$ nearest keys for each query}\label{algo:chunk}
\end{algorithm}

The pseudo-code in Algorithm~\ref{algo:chunk} outlines the ZETA Top-$k$ Attention mechanism, which combines $Z$-order curve projections with chunk-based sorting to efficiently identify and retrieve the top-$k$ nearest neighbors while maintaining causal constraints. It provides a structured approach to reduce computational overhead by limiting the search space to relevant chunks, ensuring both efficiency and adherence to sequence masking requirements.

\section{Experiments Details}
The ZETA model configuration generally involves setting the number of chunks to values such as ${4, 8, 16, 32}$ depending on the sequence length, which provides a flexible way to handle different input scales effectively. This chunking strategy facilitates parallelism in processing, allowing for efficient memory use and computational speedup during attention operations. The hidden dimension, $d_V$, is typically set to 256 or 512 with 8 attention heads when working with \textsc{LRA} datasets. However, for larger and more complex datasets such as \textsc{WikiText-103}, the hidden dimension is increased to $d_V = 768$ with 12 attention heads to ensure that the model has sufficient capacity to learn intricate long-range dependencies effectively. Additionally, the dimensions of keys and queries are kept significantly lower at $d_K = d_Q = 3$, which aids in reducing the computational burden and mitigates the ``curse of dimensionality" while still preserving enough information for efficient attention computation. This choice of dimensions strikes a balance between model efficiency and effectiveness, making ZETA well-suited for long-sequence modeling tasks.

\section{I/O-Aware ZETA Optimized with Triton}

Our Triton implementation of \methodname~focuses on improving the efficiency of sparse attention through customized kernel programming. We leverage Triton to create specialized GPU kernels for top-$k$ sparse attention. The \texttt{sparse\_topk\_attention\_kernel} and its corresponding backward pass kernel \texttt{sparse\_topk\_attention\_backward\_kernel} are implemented using the Triton JIT (Just-In-Time) compiler. This approach allows for significant speedup by optimizing memory access patterns and reducing I/O overhead during the computation. The kernel is tuned to different configurations, like ``block\_size" and ``num\_warps", which directly influence how GPU resources are allocated. Especially, we compute the mean vector of history tokens in the block of the current kernel, instead of computing global mean vectors, which effectively reduce overheads. he \texttt{@triton.autotune} decorator is used to evaluate multiple kernel configurations for optimal performance, making sure that GPU resources are well-utilized depending on sequence length and other parameters.

One key challenge addressed is efficient indexing for large tensors in the backward and forward computations. In Triton, indexing is achieved via program IDs, \texttt{tl.program\_id()}, that are used to identify which part of the workload is being computed by each block or thread, ensuring that parallelism is effectively exploited. The Triton kernel computes Cauchy Softmax for top-$k$ KV pairs for each query, employing a specialized kernel to access only the most relevant $k$ keys during the attention process. This reduces the computational complexity compared to a full attention mechanism, and Triton’s low-level bit manipulation operations (\texttt{tl.load} and \texttt{tl.store}) are used for fast data retrieval.

The sparse Attention Mechanism is computed by considering only the top-$k$ keys per query, which significantly reduces the computational load. The \texttt{sparse\_topk\_attention\_kernel} involves efficient gathering of keys and values based on top-$k$ indices. The indices are pre-computed in a sorted order, which enables efficient retrieval without scanning the entire key space.

The Triton kernel also includes custom backward functions to handle the gradient flow. The backward kernel \texttt{sparse\_topk\_attention\_backward\_kernel} computes gradients for each parameter involved in the sparse attention operation, including $q, k, v$ and the learnable parameter $\gamma$. Triton’s tl.atomic\_add is used to accumulate gradients, ensuring that all updates to shared memory locations are synchronized.

During the forward pass, intermediate values such as the Euclidean distances and normalization constants are stored. These values are reused in the backward pass, which reduces redundant computations and accelerates the training process.

By using Triton, we managed to reduce the I/O overhead that traditional \texttt{PyTorch} operations faced, especially during backward computations. Furthermore, we used fused kernels to mitigate the overhead associated with multiple indexing operations. This fusion helps in reducing the number of kernel launches, which translates to reduced latency and faster execution, as Triton allows more control over memory coalescing and efficient block-wise operations.

Overall, the Triton-based implementation in ZETA allows for a more scalable sparse attention mechanism that retains the benefits of locality preservation through $Z$-order Curves while significantly reducing computational and I/O bottlenecks. This makes the ZETA attention more suitable for long-sequence tasks where traditional transformers are too resource-intensive.

\section{Gradient Derivation for the Backward Pass of Triton ZETA}

\subsection{Updated Attention Mechanism with Sparse Attention}

We introduce sparse attention by computing values and keys from an index set \( I_i \) of top-$k$ tokens specific to each query \( \mathbf{q}_i \). The unnormalized attention scores are defined as:

\begin{equation}
S_{ij} = \frac{1}{D_{ij} + \varepsilon}, \quad \mathrm{for} \ j \in I_i
\end{equation}

where:

\begin{itemize}
    \item \( D_{ij} = \| \mathbf{q}_i - \mathbf{k}_j \|^2 \)
    \item \( \varepsilon = \gamma^2\)  is a trainable scalar parameter
\end{itemize}

Define:

\begin{equation}
\delta_{ij} = D_{ij} + \varepsilon
\end{equation}

The steps of the attention mechanism are:
\begin{enumerate}
    \item \textbf{Compute Pairwise Distances}:
   \begin{equation}
   D_{ij} = \| \mathbf{q}_i - \mathbf{k}_j \|^2, \quad \mathrm{for} \ j \in I_i
   \end{equation}
   \item \textbf{Compute Unnormalized Attention Scores}:
   \begin{equation}
   S_{ij} = \frac{1}{\delta_{ij}}, \quad \mathrm{for} \ j \in I_i
   \end{equation}
   \item \textbf{Normalize Attention Weights}:
   \begin{align}
   Z_i &= \sum_{j \in I_i} S_{ij} \\
   A_{ij} &= \frac{S_{ij}}{Z_i}, \quad \mathrm{for} \ j \in I_i
   \end{align}
   \item \textbf{Compute Output}:
   \begin{equation}
   \mathbf{o}_i = \sum_{j \in I_i} A_{ij} \mathbf{v}_j
   \end{equation}
\end{enumerate}

Our goal is to compute the gradients of the output \( \mathbf{o}_i \) with respect to \( \mathbf{q}_i \), \( \mathbf{k}_j \), \( \mathbf{v}_j \), and \( \varepsilon \), considering the sparse attention.

\subsection{Gradients with Respect to \( \mathbf{Q} \), \( \mathbf{K} \), \( \mathbf{V} \), and \( \varepsilon \)}

The gradient of the output \( \mathbf{o}_i \) with respect to \( \mathbf{v}_j \) is:

\begin{equation}
\frac{\partial \mathbf{o}_i}{\partial \mathbf{v}_j} =
\begin{cases}
A_{ij}, & \text{if } j \in I_i \\
0, & \text{otherwise}
\end{cases}
\end{equation}

% \subsubsection*{1. Compute Partial Derivatives}

\textbf{Derivative of \( D_{ij} \) with respect to \( \mathbf{q}_i \) and \( \mathbf{k}_j \):}

\begin{align}
\frac{\partial D_{ij}}{\partial \mathbf{q}_i} &= 2 (\mathbf{q}_i - \mathbf{k}_j), \quad \mathrm{for} \ j \in I_i \\
\frac{\partial D_{ij}}{\partial \mathbf{k}_j} &=
\begin{cases}
-2 (\mathbf{q}_i - \mathbf{k}_j), & \text{if } j \in I_i \\
0, & \text{otherwise}
\end{cases}
\end{align}

\textbf{Derivative of \( S_{ij} \):}

First, compute the derivative of \( S_{ij} \) with respect to \( \delta_{ij} \):

\begin{equation}
\frac{\partial S_{ij}}{\partial \delta_{ij}} = -\frac{1}{\delta_{ij}^2}, \quad \mathrm{for} \ j \in I_i
\end{equation}

Compute the derivative of \( \delta_{ij} \) with respect to \( D_{ij} \) and \( \varepsilon \):

\begin{align}
\frac{\partial \delta_{ij}}{\partial D_{ij}} &= 1, \quad \mathrm{for} \ j \in I_i \\
\frac{\partial \delta_{ij}}{\partial \varepsilon} &= 1, \quad \mathrm{for} \ j \in I_i
\end{align}

Now, compute the derivative of \( S_{ij} \) with respect to \( D_{ij} \) and \( \varepsilon \):

\begin{align}
\frac{\partial S_{ij}}{\partial D_{ij}} &= -\frac{1}{\delta_{ij}^2}, \quad \mathrm{for} \ j \in I_i \\
\frac{\partial S_{ij}}{\partial \varepsilon} &= -\frac{1}{\delta_{ij}^2}, \quad \mathrm{for} \ j \in I_i
\end{align}

\textbf{Derivative of \( S_{ij} \) with respect to \( \mathbf{q}_i \) and \( \mathbf{k}_j \):}

\begin{align}
\frac{\partial S_{ij}}{\partial \mathbf{q}_i} &= -\frac{2 (\mathbf{q}_i - \mathbf{k}_j)}{\delta_{ij}^2}, \quad \mathrm{for} \ j \in I_i \\
\frac{\partial S_{ij}}{\partial \mathbf{k}_j} &=
\begin{cases}
\frac{2 (\mathbf{q}_i - \mathbf{k}_j)}{\delta_{ij}^2}, & \text{if } j \in I_i \\
0, & \text{otherwise}
\end{cases}
\end{align}

% \subsubsection*{2. Derivative of Normalized Weights \( A_{ij} \)}

\textbf{Derivative of \( Z_i \) with respect to \( S_{ij} \):}

\begin{equation}
\frac{\partial Z_i}{\partial S_{ij}} = 1, \quad \mathrm{for} \ j \in I_i
\end{equation}

\textbf{Derivative of \( A_{ij} \) with respect to \( S_{ij} \):}

\begin{equation}
\frac{\partial A_{ij}}{\partial S_{ij}} = \frac{Z_i - S_{ij}}{Z_i^2}, \quad \mathrm{for} \ j \in I_i
\end{equation}

\textbf{Derivative of \( A_{ij} \) with respect to \( S_{il} \) for \( l \neq j \):}

\begin{equation}
\frac{\partial A_{ij}}{\partial S_{il}} =
\begin{cases}
-\dfrac{S_{ij}}{Z_i^2}, & \text{if } l \in I_i \\
0, & \text{otherwise}
\end{cases}
\end{equation}

% \subsubsection{3. Compute Gradient of Output \( \mathbf{o}_i \)}

\textbf{Gradient of \( \mathbf{o}_i \) with respect to \( S_{il} \):}

\begin{align}
\frac{\partial \mathbf{o}_i}{\partial S_{il}} &= \sum_{j \in I_i} \mathbf{v}_j \frac{\partial A_{ij}}{\partial S_{il}} \\
&= \frac{\mathbf{v}_l - \mathbf{o}_i}{Z_i}, \quad \mathrm{for} \ l \in I_i
\end{align}

% \subsection{Compute Gradients with Respect to Parameters}

\textbf{Gradient with Respect to \( \mathbf{q}_i \)}:

\begin{align}
\frac{\partial \mathbf{o}_i}{\partial \mathbf{q}_i} &= \sum_{l \in I_i} \frac{\partial \mathbf{o}_i}{\partial S_{il}} \cdot \frac{\partial S_{il}}{\partial \mathbf{q}_i} \\
&= -2 \sum_{l \in I_i} \frac{\mathbf{v}_l - \mathbf{o}_i}{Z_i} \cdot \frac{(\mathbf{q}_i - \mathbf{k}_l)}{\delta_{il}^2}
\end{align}

\textbf{Gradient with Respect to \( \mathbf{k}_j \)}:

\begin{equation}
\frac{\partial \mathbf{o}_i}{\partial \mathbf{k}_j} =
\begin{cases}
2 \frac{\mathbf{v}_j - \mathbf{o}_i}{Z_i} \cdot \frac{(\mathbf{q}_i - \mathbf{k}_j)}{\delta_{ij}^2}, & \text{if } j \in I_i \\
0, & \text{otherwise}
\end{cases}
\end{equation}

The total gradient with respect to \( \mathbf{k}_j \) is:

\begin{equation}
\frac{\partial L}{\partial \mathbf{k}_j} = \sum_{i: j \in I_i} \left( 2 \frac{\partial L}{\partial \mathbf{o}_i} \cdot \frac{\mathbf{v}_j - \mathbf{o}_i}{Z_i} \cdot \frac{(\mathbf{q}_i - \mathbf{k}_j)}{\delta_{ij}^2} \right)
\end{equation}

\textbf{Gradient with Respect to \( \varepsilon \)}:

\begin{align}
\frac{\partial \mathbf{o}_i}{\partial \varepsilon} &= \sum_{l \in I_i} \frac{\partial \mathbf{o}_i}{\partial S_{il}} \cdot \frac{\partial S_{il}}{\partial \varepsilon} \\
&= -\sum_{l \in I_i} \frac{\mathbf{v}_l - \mathbf{o}_i}{Z_i} \cdot \frac{1}{\delta_{il}^2}
\end{align}

The total gradient with respect to \( \varepsilon \) is:

\begin{equation}
\frac{\partial L}{\partial \varepsilon} = -\sum_i \left( \frac{\partial L}{\partial \mathbf{o}_i} \sum_{l \in I_i} \frac{\mathbf{v}_l - \mathbf{o}_i}{Z_i} \cdot \frac{1}{\delta_{il}^2} \right)
\end{equation}

\subsection{Summary of Gradient Computations}

\begin{enumerate}
    \item \textbf{Compute \( D_{ij} \), \( \delta_{ij} \), \( S_{ij} \), \( Z_i \), \( A_{ij} \), and \( \mathbf{o}_i \):}

   \begin{align}
   D_{ij} &= \| \mathbf{q}_i - \mathbf{k}_j \|^2, \quad \mathrm{for} \ j \in I_i \\
   \delta_{ij} &= D_{ij} + \varepsilon, \quad \mathrm{for} \ j \in I_i \\
   S_{ij} &= \frac{1}{\delta_{ij}}, \quad \mathrm{for} \ j \in I_i \\
   Z_i &= \sum_{j \in I_i} S_{ij} \\
   A_{ij} &= \frac{S_{ij}}{Z_i}, \quad \mathrm{for} \ j \in I_i \\
   \mathbf{o}_i &= \sum_{j \in I_i} A_{ij} \mathbf{v}_j
   \end{align}

    \item \textbf{Compute Gradient with Respect to \( \mathbf{V} \):}

   \begin{equation}
   \frac{\partial L}{\partial \mathbf{v}_j} = \sum_{i: j \in I_i} \frac{\partial L}{\partial \mathbf{o}_i} \cdot A_{ij}
   \end{equation}
   
   \item \textbf{Compute Gradient with Respect to \( \mathbf{Q} \):}

   \begin{equation}
   \frac{\partial L}{\partial \mathbf{q}_i} = -2 \left( \frac{\partial L}{\partial \mathbf{o}_i} \right) \sum_{j \in I_i} \frac{\mathbf{v}_j - \mathbf{o}_i}{Z_i} \cdot \frac{(\mathbf{q}_i - \mathbf{k}_j)}{\delta_{ij}^2}
   \end{equation}
   
   \item 
   \textbf{Compute Gradient with Respect to \( \mathbf{K} \):}

   \begin{equation}
   \frac{\partial L}{\partial \mathbf{k}_j} = \sum_{i: j \in I_i} \left( 2 \frac{\partial L}{\partial \mathbf{o}_i} \cdot \frac{\mathbf{v}_j - \mathbf{o}_i}{Z_i} \cdot \frac{(\mathbf{q}_i - \mathbf{k}_j)}{\delta_{ij}^2} \right)
   \end{equation}
   
   \item 
   
   \textbf{Compute Gradient with Respect to \( \varepsilon \):}

   \begin{equation}
   \frac{\partial L}{\partial \varepsilon} = -\sum_i \left( \frac{\partial L}{\partial \mathbf{o}_i} \sum_{j \in I_i} \frac{\mathbf{v}_j - \mathbf{o}_i}{Z_i} \cdot \frac{1}{\delta_{ij}^2} \right)
   \end{equation}
\end{enumerate}

\subsection{Implementation Notes}

\begin{itemize}
    \item \textbf{Sparse Attention Index Set \( I_i \):}
    \begin{itemize}
        \item \( I_i \) is the set of indices that query \( \mathbf{q}_i \) attends to, which is collected from $Z$-order Curve projected sequences.
        \item The attention computations and gradient updates are performed only over \( j \in I_i \).
    \end{itemize}
    
    \item \textbf{Numerical Stability:}
    \begin{itemize}
        \item The addition of \( \varepsilon \) ensures that \( \delta_{ij} > 0 \) if \( \varepsilon > 0 \), preventing division by zero.
        \item Ensure that \( \varepsilon \) remains positive during training, possibly by parameterizing \( \varepsilon = \exp(\theta) \) where \( \theta \) is unconstrained.
    \end{itemize}
    
    \item \textbf{Efficient Computation:}
    \begin{itemize}
        \item Utilize sparse matrix representations to handle the index sets \( I_i \) efficiently.
        \item Use vectorized operations and appropriate masking to perform computations only over the valid indices.
    \end{itemize}
\end{itemize}

\section{Limitations}
Given that our method is a top-$k$ attention mechanism, there are some shared limitations between our method and that of prior work that deals with attention, such as there still potentially being higher chances to ignore attention to important information (with low attention scores) than full attention methods given the use of only the top-$k$ tokens.
\section{Additional Experiments}
\subsection{Ablation on Attention Performance with Varying $d_K$ on LRA}

We expand on the ablation study presented in Figure 2(b), focusing on the ListOps and Image tasks from the Long Range Arena (LRA) benchmark. Specifically, we examine the impact of varying the dimensionality of the keys and queries ($d_K$) on attention performance. The results are summarized in \autoref{tab:dk-lra}.

Our experimental findings indicate that the performance remains relatively consistent for $d_K \geq 3$, whereas a noticeable decline is observed for $d_K < 3$. This supports our hypothesis that, unlike value vectors, the keys and queries predominantly encode positional information rather than intricate semantic features. As such, reducing $d_K$ to a small value allows us to effectively lower computational costs without incurring a significant loss in performance. This insight guided our selection of $d_K$ values in subsequent experiments.
\begin{table}[h]
\centering
\begin{tabular}{c|cccccc}
\toprule
$d_K$     & 1    & 2    & 3    & 32   & 64   & 128  \\ \midrule
ListOps   & 31.04 & 34.79 & 36.06 & 36.19 & 36.32 & 36.37 \\
Image     & 37.6  & 40.23 & 42.64 & 42.72 & 42.51 & 42.44 \\ \bottomrule
\end{tabular}
\caption{Performance metrics for different values of $d_K$.}
\label{tab:dk-lra}
\end{table}

\subsection{Performance of \methodname\,using different similarity metric}

We utilize Euclidean distance for k-nearest neighbor (k-NN) searches to identify the top-$k$ attended tokens, as it is particularly well-suited for this purpose. In contrast, dot-product similarity cannot be directly employed for k-NN searches without normalization, as highlighted in \citep{mao2024iceformer}. Our experimental results also indicate that normalized dot-product similarity performs worse than Euclidean distance, as demonstrated in additional MQAR experiments below.

Specifically, Euclidean distance-based methods, such as Negative Euclidean with traditional softmax and Cauchy Softmax, consistently outperform dot-product-based methods for top-$k$ attention when using a small dimensionality ($d_k \leq 4$), which is the setting adopted in ZETA.

\begin{table}[h]
\centering
\begin{tabular}{c|cccc}
\toprule
$d_k$                & 1    & 2    & 3    & 4    \\ \midrule
Negative Euclidean   & 64.6 & 99.4 & 99.4 & 99.3 \\
Inverse Euclidean    & 22.9 & 81.3 & 99.6 & 99.9 \\ 
Cauchy Softmax       & 74.5 & 99.6 & 99.5 & 99.2 \\ 
Normalized Dot Prod  & 62.6 & 92.6 & 99.3 & 99.1 \\ \bottomrule
\end{tabular}
\caption{Performance using various different similarity metrics}
\label{tab:similarity_measures}
\end{table}
}
\end{document}